\def\eqref#1{equation~\ref{#1}}
\def\Eqref#1{Equation~\ref{#1}}
\def\1{\bm{1}}
\def\rw{{\textnormal{w}}}
\def\rx{{\textnormal{x}}}
\def\ry{{\textnormal{y}}}
\def\rdelta{{\delta}}
\def\rtheta{{\theta}}
\def\rvw{{\mathbf{w}}}
\def\rvx{{\mathbf{x}}}
\def\rvy{{\mathbf{y}}}
\def\rvdelta{{\bm{\delta}}}
\def\rvzeta{{\bm{\zeta}}}
\def\rvtheta{{\bm{\theta}}}
\def\rvvarphi{{\bm{\varphi}}}
\def\rveta{{\bm{\eta}}}
\def\rvxi{{\bm{\xi}}}
\def\rvrho{{\bm{\rho}}}
\def\rvpsi{{\bm{\psi}}}
\def\ervdelta{{\delta}}
\def\ervzeta{{\zeta}}
\def\ervtheta{{\theta}}
\def\ervvarphi{{\varphi}}
\def\erveta{{\eta}}
\def\ervxi{{\xi}}
\def\vzero{{\bm{0}}}
\def\vone{{\bm{1}}}
\def\vtheta{{\bm{\theta}}}
\def\va{{\bm{a}}}
\def\vm{{\bm{m}}}
\def\vr{{\bm{r}}}
\def\vv{{\bm{v}}}
\def\vx{{\bm{x}}}
\def\vy{{\bm{y}}}
\def\evm{{m}}
\def\mA{{\bm{A}}}
\def\mB{{\bm{B}}}
\def\mI{{\bm{I}}}
\def\mR{{\bm{R}}}
\DeclareMathAlphabet{\mathsfit}{\encodingdefault}{\sfdefault}{m}{sl}
\SetMathAlphabet{\mathsfit}{bold}{\encodingdefault}{\sfdefault}{bx}{n}
\def\emB{{B}}
\newcommand{\E}{\mathbb{E}}
\newcommand{\R}{\mathbb{R}}
\DeclareMathOperator{\Tr}{Tr}
\newtheorem{theorem}{Theorem}
\newtheorem{lemma}{Lemma}
\newtheorem{assumption}{Assumption}
\newtheorem{corollary}{Corollary}
\newtheorem{definition}{Definition}
\def\diag{\mathop{\rm diag}\nolimits}
\title{Bayesian interpretation of SGD as It\^o process}
\author{
  Soma Yokoi \\
  The University of Toyko / RIKEN\\
   \And
  Issei Sato\\
  The University of Toyko / RIKEN\\
}
\begin{document}
\maketitle

\begin{abstract}
The current interpretation of stochastic gradient descent (SGD) as a stochastic process lacks generality in that its numerical scheme restricts continuous-time dynamics as well as the loss function and the distribution of gradient noise.
We introduce a simplified scheme with milder conditions that flexibly interprets SGD as a discrete-time approximation of an It\^o process.
The scheme also works as a common foundation of SGD and stochastic gradient Langevin dynamics (SGLD), providing insights into their asymptotic properties.
We investigate the convergence of SGD with biased gradient in terms of the equilibrium mode and the overestimation problem of the second moment of SGLD.
\end{abstract}

\section{Introduction}
There is a demand to clarify statistical properties of stochastic gradient descent (SGD) by interpreting as a stochastic process, based on stochastic gradient-Markov chain Monte Carlo (SG-MCMC).
The recent studies have achieved it by considering SGD as a limited variant of stochastic gradient Langevin dynamics (SGLD)~\citep{Welling:2011aa}, but at the expense of requiring strong assumptions on the gradient noise and loss function of SGD, which also result in a restricted parameter distribution \citep{Mandt:2017aa, Maddox:2019aa}.
In this paper, we solve these problems simultaneously with a new numerical scheme that approximates an It\^o process with milder conditions.
We validate it by a weak approximation theory in stochastic analysis and numerical experiments.

To the best of our knowledge, this paper is the first to introduce a simplified numerical scheme that has fewer conditions than the Euler-Maruyama~(EM) scheme in the SGD and SG-MCMC contexts.
The first order EM scheme has been used in SGLD and extensions \citep{Welling:2011aa, Ma:2015aa}, and higher order schemes have been used for extra sampling accuracy \citep{Chen:2015ab,Mou:2019vl}.
In contrast, we introduce a lower order scheme to construct a common foundation of SGD and SGLD, bridging the gap between them.
Our scheme adopts a somewhat inaccurate discrete-time approximation of Wiener processes.
This reduces the assumptions about gradient noise and allows a wider class of loss functions and parameter distributions.
Technically, our scheme is shown to be the most fundamental and least hypothetical interpretation of SGD in the weak approximation aspect.

Our new interpretation contribute to simplifying the understanding of SGD.
Many optimization problems by SGD with smooth loss functions can be interpreted as It\^o processes.
It gives insight into the asymptotic behavior of SGD and SGLD.
As examples, we investigate the effect of reducing stepsize on convergence with biased gradient in terms of stationary distribution.
We also reinterpret SGLD and propose an extension to avoid overestimating the second moment of diffusion.

\begin{table*}[t!]
\caption{Comparison of Numerical Schemes}\label{tab:comp_moment}
\begin{center}
\begin{tabular}{lccccc}
\toprule
    &\textbf{Skewed EM} &\textbf{Simplified EM} &\textbf{EM}\\
\midrule
\makecell*{$p$-th moment\\condition} 
& \makecell*{$
    \begin{cases}
        0   & (p = 1)\\
        1   & (p = 2)
    \end{cases}$}
& \makecell*{$
    \begin{cases}
        0   & (p = 1,3)\\
        1   & (p = 2)
    \end{cases}$}
& \makecell*{$
    \begin{cases}
        0       & (p = 1,3,\dots)\\
        (p-1)!! & (p = 2,4,\dots)
    \end{cases}$}\\
\makecell*{Weak order} & $0.5$ & $1$ & $1$\\
\bottomrule
\end{tabular}
\end{center}\end{table*}

Our contributions are summarized as follows:
\begin{itemize}
    \item Section\,\ref{sec:scheme}: we introduce a new numerical scheme of continuous-time dynamics that has the fewest requirements of the diffusion moment.
    \item Section\,\ref{sec:sampling}: we propose two Bayesian sampling methods based on the schemes introduced in Section\,\ref{sec:scheme}, ensuring the Bayesian posterior for the stationary distribution.
    \item Section\,\ref{sec:SGD}: we present a novel interpretation of SGD based on the new scheme, demonstrating generality in that it eliminates the most assumptions of the previous studies.
    \item Section\,\ref{sec:reinterpret_SGLD}: we reinterpret SGLD as the new scheme of an It\^o process and propose an extension to avoid overestimating the second moment of diffusion.
\end{itemize}

\section{Weak numerical scheme of SDE} \label{sec:scheme}
In this section, we quickly review the existing numerical schemes of the It\^o process.
Then we propose a new scheme called skewed EM.

Let us consider continuous-time dynamics of a stochastic differential equation (SDE)
\begin{equation}\label{equ:general_SDE}
    d\rvx_t = \va(\rvx_t)dt + \mB(\rvx_t)d\rvw_t,
\end{equation}
where $\rvx_t$ is the $\R^D$-stochastic process with given initial value $\rvx_{t_0}$, $\rvw_t$ is the $\R^M$-standard Wiener process, and coefficients $\va(\vx)\in\R^D$, $\mB(\vx)\in\R^{D\times M}$.
Subsequent discussion is subject to the assumption common to previous studies e.g., \citep{Sato:2014xy}.
\begin{assumption}\label{asm:common}\leavevmode
    \begin{enumerate}
        \item Coefficient functions $\va(\vx)$, $\mB(\vx)$ are sufficiently smooth and satisfy a Lipschitz condition:
    for all $\vx,\vy\in\R^D$ the following inequality holds for some constant $C>0$
    \begin{equation}\label{equ:Lipschitz}
        \begin{aligned}
            \|\va(\vx)-\va(\vy)\|_2 + \sum_{m=1}^M &\|\emB^{*,m}(\vx)-\emB^{*,m}(\vy)\|_2 \leq C\|\vx-\vy\|_2,
        \end{aligned}
    \end{equation}
    where $*$ indicates all indexes.
        \item Test function $f:\R^D\to\R$ is sufficiently smooth.
    \end{enumerate}
\end{assumption}
We also employ the assumption of \citet{Milstein:1986aa,Milstein:1995aa}, which is $\kappa$-th growth condition of partial derivatives.
\begin{assumption}\label{asm:milstein}
    Functions $\va(\vx)$, $\mB(\vx)$, $f$, and their partial derivatives up to the third orders, satisfy $\kappa$-th growth condition: for function $g$ there exists some constants $C>0, \kappa\geq0$ such that
    \begin{equation}
        \|g(\vx)\|_2 \leq C (1+\|\vx\|_2^\kappa).
    \end{equation}
\end{assumption}

\subsection{Review: EM scheme}
Consider time interval $[t_0, t_0+T]$ with step $\epsilon = T/K: t_0<t_1<\cdots<t_K=t_0+T$, $t_{k+1}-t_k = \epsilon$.
The EM scheme of \autoref{equ:general_SDE} is known as
\begin{equation}\label{equ:EM}
    \rvy_{k+1} = \rvy_k + \epsilon \va(\rvy_k) + \sqrt{\epsilon} \mB(\rvy_k) \rvzeta_k,
\end{equation}
where $\rvy\in\R^D$ has the same initial value $\rvy_0 = \rvx_{t_0}$ and $\rvzeta$ follows the $\R^M$-standard Gaussian distribution $\rvzeta\sim\mathcal{N}(\vzero,\mI_M)$.

The approximation accuracy of the EM scheme has been widely investigated \citep{Milstein:1979aa,Talay:1984aa,Kloeden:1992aa}.
The EM scheme has order of accuracy $1$ in weak (distribution law) approximations:
\begin{definition}[Weak convergence]
    The time-discrete approximation $\{\rvy_k\}_{k=1}^K$ has weak order $q>0$ if there exists some constant $C>0$ such that
    \begin{equation}
        \left| \E[f(\rvx_{t_k})] - \E[f(\rvy_k)] \right| \leq C \epsilon^q
    \end{equation}
    for $0<\epsilon<1$, all $K>0$, $k=0,1,\dots,K$, and some test function $f$.
\end{definition}

The Gaussian constraint of the diffusion $\rvzeta$ can be written in a different form.
It is known that a standard Gaussian random variable $\ervzeta^d\in\R$ satisfies the following moment condition for $p=1,2,\dots,\infty$:
\begin{equation} \label{equ:SGLD_moments}
    \mathbb{E}[(\ervzeta^d)^p] = \begin{cases}
        0       & (p = 1,3,\dots)\\
        (p-1)!! & (p = 2,4,\dots),
    \end{cases}
\end{equation}
where $!!$ is the double factorial.
That is, the component $\ervzeta^d$ must satisfy an infinite number of the moment conditions.
In this view, we show in this section how we can reduce the requirement in numerical schemes.

\begin{figure*}[t!]\begin{center}
  \includegraphics[width=0.32\linewidth]{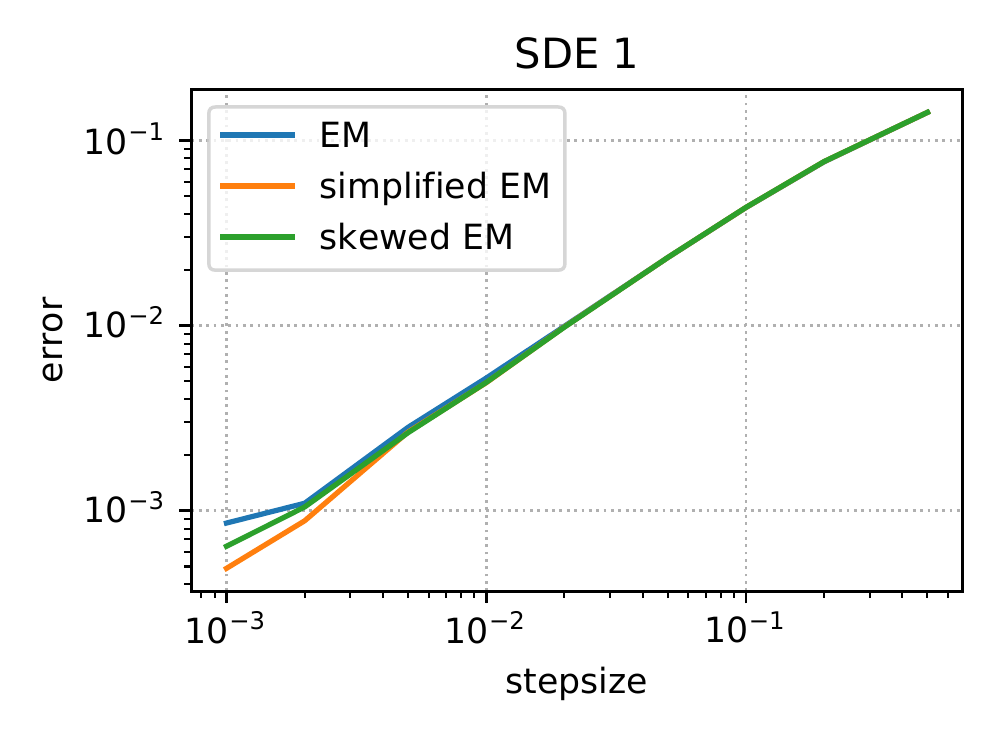}
  \includegraphics[width=0.32\linewidth]{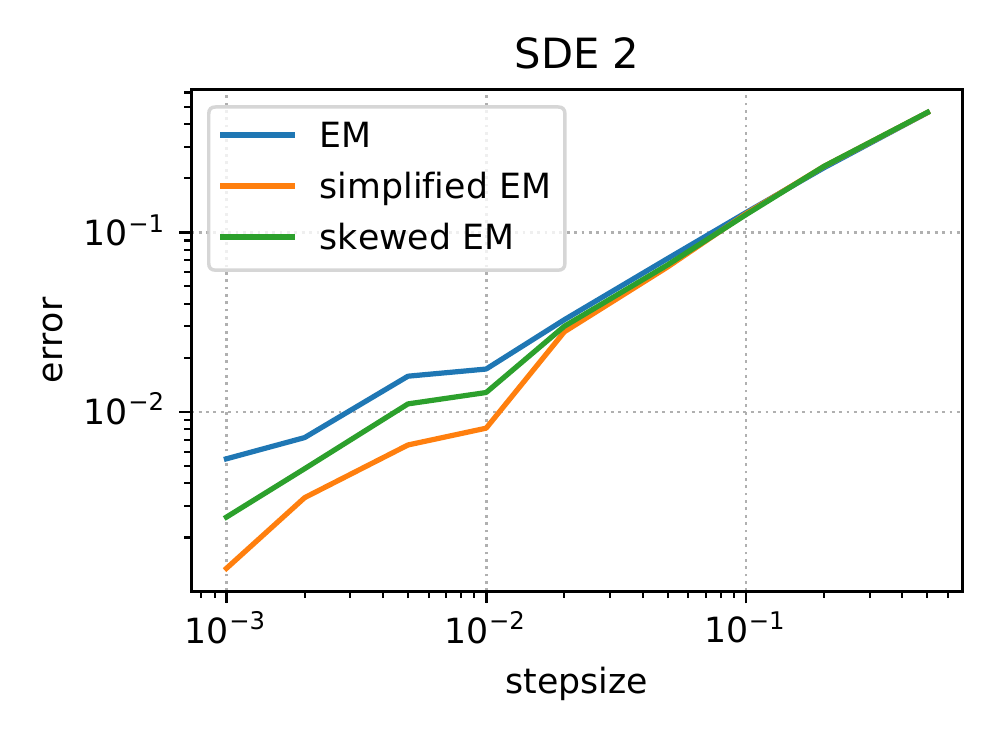}
  \includegraphics[width=0.32\linewidth]{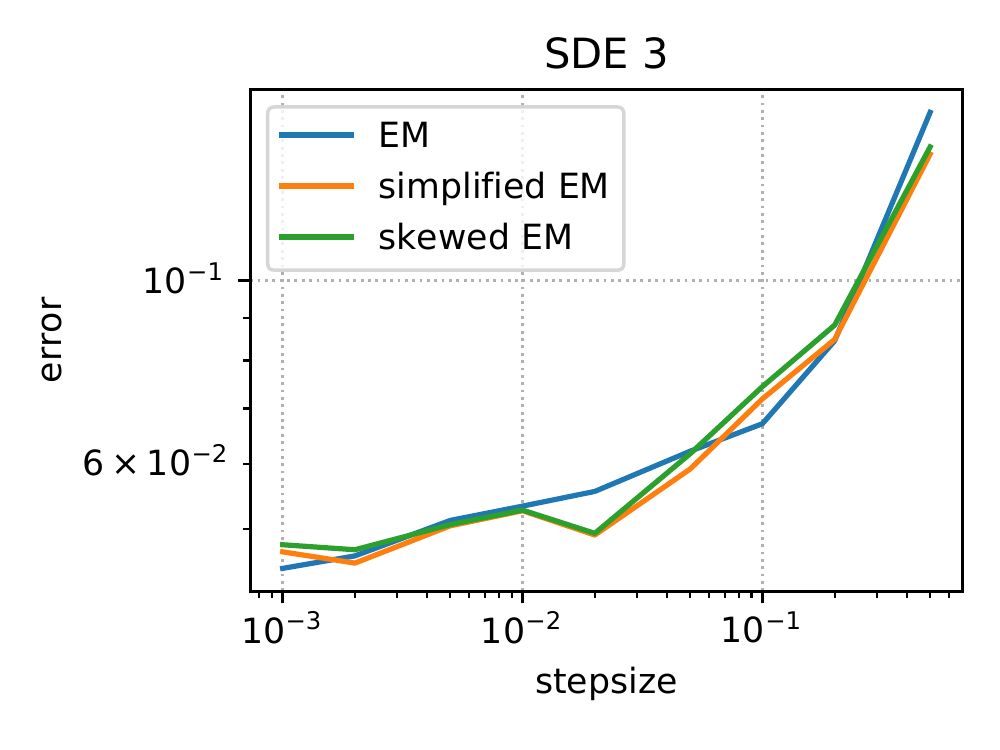}
  \caption{
    Mean error of the numerical schemes.
  }
  \label{fig:example_scheme}
\end{center}\end{figure*}

\subsection{Review: simplified EM scheme}
The simplified EM scheme is a less restrictive than the EM scheme, although it achieves the same accuracy in weak approximations.
It was first introduced by \citet{Milstein:1979aa} and \citet{Talay:1984aa} to demonstrate a non-Gaussian time-discrete approximation of the Wiener process.
The simplified EM scheme of \autoref{equ:general_SDE} can be written as
\begin{equation}\label{equ:alg_simplifiedEM}
  \rvy_{k+1} = \rvy_k + \epsilon \va(\rvy_k) + \sqrt{\epsilon} \mB(\rvy_k) \rveta_k,
\end{equation}
where $\rveta = (\erveta^1, \dots, \erveta^D)^\top$ is a random vector satisfying the following moment condition for $p=1,2,3$ and $d=1,\dots,D$,
\begin{equation}\label{equ:moment_simpleEM}
    \mathbb{E}[(\erveta^d)^p] = \begin{cases}
        0   & (p = 1,3)\\
        1   & (p = 2).
    \end{cases}
\end{equation}
The fourth or higher moments are not constrained.

The common use of this scheme is replacing the Gaussian random variable of the EM scheme with a symmetric two-point distributed random variable
\begin{equation}
    P(\erveta^d = \pm \sqrt{\epsilon}) = \frac{1}{2}
\end{equation}
to reduce computation time in pseudorandom number generation \citep{Kloeden:1992aa}.

Note that the simplified EM scheme enjoys sampling accuracy of weak order $1$, which is the same as the EM scheme.
The higher moment constraints ($p\geq 4$) does not contribute to accuracy in this respect.

\subsection{Proposal: skewed EM scheme}
We introduce a new numerical scheme, skewed EM, as the simplest EM scheme.
The following theorem is the foundation of this paper.

\begin{theorem}\label{thm:main}
    Suppose Assumption\,\ref{asm:common} and \ref{asm:milstein}.
    Consider time-discrete approximation
    \begin{equation}\label{equ:skewedEM}
        \rvy_{k+1} = \rvy_k + \epsilon \va(\rvy_k) + \sqrt{\epsilon} \mB(\rvy_k) \rvxi_k,
    \end{equation}
    where $\rvxi\in\R^M$ is a random vector with component $\ervxi^m, \ m=1,\dots,M$ satisfying moment condition
    \begin{equation}\label{equ:moment_skewEM}
        \mathbb{E}[(\ervxi^m)^p] = \begin{cases}
            0   & (p = 1)\\
            1   & (p = 2).
        \end{cases}
    \end{equation}
    Then, for all $K>0$ and all $k=0,1,\dots,K$ the following inequality holds:
    \begin{equation}
        \left| \E[f(\rvx_{t_k})] - \E[f(\rvy_k)] \right| \leq C \sqrt{\epsilon},
    \end{equation}
    where $C>0$ is some constant.
    i.e. \Eqref{equ:skewedEM} has order of accuracy $0.5$ in the sense of weak approximations.
\end{theorem}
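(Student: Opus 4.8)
The plan is to appeal to the standard machinery for the weak error of one-step schemes, as in \citet{Milstein:1986aa,Milstein:1995aa}: under Assumptions~\ref{asm:common} and \ref{asm:milstein} the function $u(s,\vx):=\E[f(\rvx_T)\mid\rvx_s=\vx]$ is smooth with derivatives of $\kappa$-growth, and telescoping along the grid gives $\E[f(\rvy_K)]-\E[f(\rvx_T)]=\sum_{k=0}^{K-1}\E\big[u(t_{k+1},\rvy_{k+1})-u(t_{k+1},\bar{\rvx}_{t_{k+1}})\big]$, where $\bar{\rvx}_{t_{k+1}}$ is the exact solution over one step started from $\rvy_k$. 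Each summand is the one-step weak error of \eqref{equ:skewedEM} against the test function $u(t_{k+1},\cdot)$; combined with the uniform moment bounds on the iterates $\rvy_k$ that Assumptions~\ref{asm:common}--\ref{asm:milstein} guarantee, it suffices to prove that for every smooth $g$ of $\kappa$-growth the local weak error $\big|\E[g(\rvy_k+\rvdelta)]-\E[g(\rvy_k+\rvdelta^\star)]\big|$ is $\le C(1+\|\rvy_k\|_2^\kappa)\,\epsilon^{3/2}$, where $\rvdelta=\epsilon\va(\rvy_k)+\sqrt{\epsilon}\,\mB(\rvy_k)\rvxi_k$ and $\rvdelta^\star$ is the increment of the exact flow; summing $K=T/\epsilon$ such terms then yields the $C\sqrt{\epsilon}$ bound for the terminal time, and the same argument stopped at step $k$ handles every intermediate $t_k$.

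For the local error I would Taylor-expand $g$ about $\rvy_k$ to second order with third-order remainder, so the difference reduces to $\sum_i\partial_i g\,(\E[\delta^{(i)}]-\E[\delta^{\star(i)}])+\tfrac12\sum_{ij}\partial_i\partial_j g\,(\E[\delta^{(i)}\delta^{(j)}]-\E[\delta^{\star(i)}\delta^{\star(j)}])$ plus remainders controlled by $C(\E[\|\rvdelta\|_2^3]+\E[\|\rvdelta^\star\|_2^3])$. A standard It\^o--Taylor expansion of the exact flow gives $\E[\delta^{\star(i)}]=\epsilon a_i(\rvy_k)+O(\epsilon^2)$ and $\E[\delta^{\star(i)}\delta^{\star(j)}]=\epsilon(\mB\mB^\top)_{ij}(\rvy_k)+O(\epsilon^2)$, while \eqref{equ:moment_skewEM}, read with distinct components of $\rvxi$ uncorrelated, gives $\E[\delta^{(i)}]=\epsilon a_i(\rvy_k)$ and $\E[\delta^{(i)}\delta^{(j)}]=\epsilon(\mB\mB^\top)_{ij}(\rvy_k)+\epsilon^2 a_i(\rvy_k)a_j(\rvy_k)$; hence the first- and second-moment discrepancies are $O(\epsilon^2)$. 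The remainders are $O(\epsilon^{3/2})$: since $\E[\|\rvdelta\|_2^3]\le C(\epsilon^3\|\va(\rvy_k)\|_2^3+\epsilon^{3/2}\|\mB(\rvy_k)\|_2^3\,\E[\|\rvxi\|_2^3])$, the $\kappa$-growth bounds of Assumption~\ref{asm:milstein} on $\va,\mB$ together with finiteness of the third absolute moment of $\rvxi$ give $\E[\|\rvdelta\|_2^3]\le C(1+\|\rvy_k\|_2^\kappa)\epsilon^{3/2}$, and similarly for $\rvdelta^\star$. Collecting the pieces gives the local bound $C(1+\|\rvy_k\|_2^\kappa)\epsilon^{3/2}$, and the telescoping above completes the proof.

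The step I expect to be the real crux — and the reason the exponent is $1/2$ rather than the $1$ of the EM and simplified EM schemes — is exactly this third-order remainder: written out, the scheme contributes $\tfrac16\epsilon^{3/2}\sum_{ijk}\partial_i\partial_j\partial_k g\sum_m B_{im}B_{jm}B_{km}\,\E[(\ervxi^m)^3]+O(\epsilon^2)$, whereas the exact flow contributes only $O(\epsilon^2)$ at that order, so the term survives at order $\epsilon^{3/2}$ unless $\E[(\ervxi^m)^3]$ is controlled. For the EM scheme it vanishes because Gaussian odd moments vanish, and for the simplified EM scheme \eqref{equ:moment_simpleEM} forces $\E[(\erveta^m)^3]=0$, which is precisely what lets those schemes match one more moment and reach weak order $1$; \eqref{equ:moment_skewEM} imposes nothing on it, pinning the order at $1/2$ — no condition weaker than constraining the third moment can raise it, and reinstating the third-moment constraint recovers order $1$. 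What remains is bookkeeping around Assumptions~\ref{asm:common}--\ref{asm:milstein}: that the effective test functions $u(t_{k+1},\cdot)$ and their derivatives satisfy the $\kappa$-growth condition so that the $O(\cdot)$ remainders are uniform in $\rvy_k$, that the numerical iterates have uniformly bounded moments, and — the one hypothesis not spelled out in \eqref{equ:moment_skewEM} but used here — that $\rvxi$ has a finite third absolute moment so that the cubic remainder is itself finite.
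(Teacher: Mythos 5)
Your proof is correct and follows essentially the same route as the paper: both reduce the claim to Milstein's one-step criterion --- first and second moments of the increment matching the exact flow to $O(\epsilon^2)$ and the third absolute moment bounded by $O(\epsilon^{3/2})$, combined with uniform moment bounds on the iterates --- and you correctly flag the two hypotheses the theorem statement leaves implicit (uncorrelated components of $\rvxi$ and a finite third absolute moment), both of which the paper also uses silently in its Lemmas. The only cosmetic difference is that you compare the scheme's increment directly to the It\^o--Taylor expansion of the exact flow, whereas the paper inserts the EM increment as an intermediary and cites the known EM one-step bounds before invoking the same Milstein theorem.
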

\begin{proof}
    See \autoref{app:proof_thm:main}.
\end{proof}

We call this scheme skewed EM, because the third moment is not standardized.
Note that the skewed EM scheme has the fewest moment conditions, at the cost of a slower weak order than the EM and simplified EM schemes.
We will see later that eliminating the third or higher moment conditions contributes to simplifying the interpretation of SGD.

The skewed EM scheme is the simplest possible EM scheme.
The following lemma implies that the moment conditions cannot be fewer than \Eqref{equ:moment_skewEM}.
\begin{lemma}\label{lem:impossibleEM}
    Suppose Assumption\,\ref{asm:common} and \ref{asm:milstein}.
    Consider time-discrete approximation
    \begin{equation}\label{equ:impossibleEM}
        \rvy_{k+1} = \rvy_k + \epsilon \va(\rvy_k) + \sqrt{\epsilon} \mB(\rvy_k) \rvpsi_k,
    \end{equation}
    where $\E[\rvpsi] = \vzero$.
    That is, only the first moment is constrained.
    Then, the following inequality holds:
    \begin{equation}
        \left| \E[f(\rvx_{t_k})] - \E[f(\rvy_k)] \right| \leq C,
    \end{equation}
    i.e. \Eqref{equ:impossibleEM} does not weakly converge (order $0$).
\end{lemma}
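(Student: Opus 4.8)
The plan is to read this lemma as an impossibility statement: because $\rvpsi$ is constrained only through $\E[\rvpsi]=\vzero$, the family of schemes of the form \autoref{equ:impossibleEM} is large enough to contain members that are not even consistent with \autoref{equ:general_SDE}. It therefore suffices to exhibit one concrete instance --- an SDE satisfying Assumption\,\ref{asm:common} and Assumption\,\ref{asm:milstein}, a test function $f$, and a mean-zero noise $\rvpsi$ --- for which $\left|\E[f(\rvx_{t_k})]-\E[f(\rvy_k)]\right|$ stays bounded away from $0$ uniformly in $\epsilon$, ruling out any bound of the form $C\epsilon^q$ with $q>0$. The bound ``$\leq C$'' asserted in the statement then comes for free, since in the instance below both expectations remain bounded on $0\leq k\leq K$.

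For the instance I would take the simplest non-trivial SDE: $D=M=1$, $\va\equiv 0$, $\mB\equiv 1$, and $\rvx_{t_0}=0$, so that \autoref{equ:general_SDE} becomes $d\rvx_t=d\rvw_t$ with solution $\rvx_t=\rvw_t\sim\mathcal{N}(0,t-t_0)$; constant coefficients obviously satisfy the Lipschitz and $\kappa$-th growth conditions. For the noise I would take $\rvpsi_0,\rvpsi_1,\dots$ independent with $P(\rvpsi_k=+\sqrt{v})=P(\rvpsi_k=-\sqrt{v})=\tfrac{1}{2}$ for a fixed $v>0$ with $v\neq 1$ (the degenerate choice $v=0$, i.e.\ $\rvpsi_k\equiv 0$, works just as well and makes the failure even more transparent). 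Then $\E[\rvpsi_k]=0$ --- only the first moment is constrained, exactly as hypothesised --- while $\E[\rvpsi_k^2]=v$. Finally I would take $f(x)=x^2$, which is smooth with polynomial growth, hence admissible under Assumption\,\ref{asm:common}.

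With these choices the recursion \autoref{equ:impossibleEM} collapses to $\rvy_k=\sqrt{\epsilon}\sum_{j=0}^{k-1}\rvpsi_j$, so that $\E[\rvy_k]=0$ and, by independence, $\E[\rvy_k^2]=\epsilon\, k\, v$; at the final step $k=K$, with $T:=\epsilon K$ fixed, this gives $\E[f(\rvy_K)]=Tv$, whereas $\E[f(\rvx_{t_0+T})]=\E[\rvw_T^2]=T$. Hence
\[
  \bigl|\E[f(\rvx_{t_K})]-\E[f(\rvy_K)]\bigr| = T\,|1-v| > 0,
\]
a strictly positive constant independent of $\epsilon$, which is incompatible with weak convergence of any positive order; and at each intermediate step the error equals $\epsilon\, k\, |1-v|\leq T\,|1-v|$, giving the stated uniform bound with $C=T\,|1-v|$. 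There is essentially no technical obstacle here --- the one point to get right is the logical framing, namely that ``order $0$'' is to be read as ``no positive weak order is attainable in general''. The conceptual content worth stating alongside the computation is that fixing only $\E[\rvpsi]$ leaves $\E[\rvpsi^2]$ free, yet it is precisely $\E[\rvpsi^2]=1$ that forces the discrete increments to reproduce the quadratic variation $\E[(\rvw_{t_{k+1}}-\rvw_{t_k})^2]=\epsilon$ of the Wiener process; this is exactly the requirement that \autoref{equ:moment_skewEM} in Theorem\,\ref{thm:main} restores, and it is the minimal one.
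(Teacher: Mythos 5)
Your proposal is correct, but it proves the lemma by a genuinely different route than the paper. The paper stays entirely inside the Milstein one-step machinery: it defines the one-step increment $\underline{\Delta} = \epsilon \va(\vy_k) + \sqrt{\epsilon}\mB(\vy_k)\rvpsi_k$ and shows (by the same expansion as in the proof of Lemma\,\ref{lem:onestep_lemma1_b}) that when only $\E[\rvpsi]=\vzero$ is imposed, the second-moment matching error $\bigl|\E[\Delta^{d_1}\Delta^{d_2} - \underline{\Delta}^{d_1}\underline{\Delta}^{d_2}]\bigr|$ is only $O(\epsilon)$ rather than $O(\epsilon^2)$; feeding this into Theorem 2 and Lemma 5 of Milstein then yields global order $0$, i.e.\ the uniform bound $\leq C$. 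You instead exhibit an explicit instance ($d\rx_t = d\rw_t$, $f(x)=x^2$, two-point noise with variance $v\neq 1$) where the error equals $T|1-v|$ independently of $\epsilon$. The two arguments buy different things. The paper's route gives the stated inequality uniformly over all admissible coefficients and noises, but --- being an upper bound --- it does not by itself substantiate the gloss ``does not weakly converge'': an error bounded by $C$ could still tend to zero. Your counterexample is exactly what is needed to make that negative claim rigorous, and your closing remark correctly identifies the mechanism (the free second moment of $\rvpsi$ breaks the quadratic-variation matching $\E[(\rw_{t_{k+1}}-\rw_{t_k})^2]=\epsilon$), which is precisely the term the paper's one-step lemma shows to be mismatched at order $\epsilon$. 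The one caveat is that your argument establishes the inequality $\leq C$ only for your chosen instance, not for the whole class; if the lemma is read as asserting that uniform bound for every mean-zero $\rvpsi$, you would still need the paper's (or an equivalent) moment-boundedness argument --- and indeed that bound can fail outright if $\rvpsi$ has infinite variance, a gap that afflicts the paper's own statement as much as your proof. As a demonstration of the lemma's actual content, your approach is sound and arguably sharper.
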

\begin{proof}
    See \autoref{app:proof_lem:impossibleEM}.
\end{proof}

\begin{figure*}[t!]\begin{center}
  \includegraphics[width=0.32\linewidth]{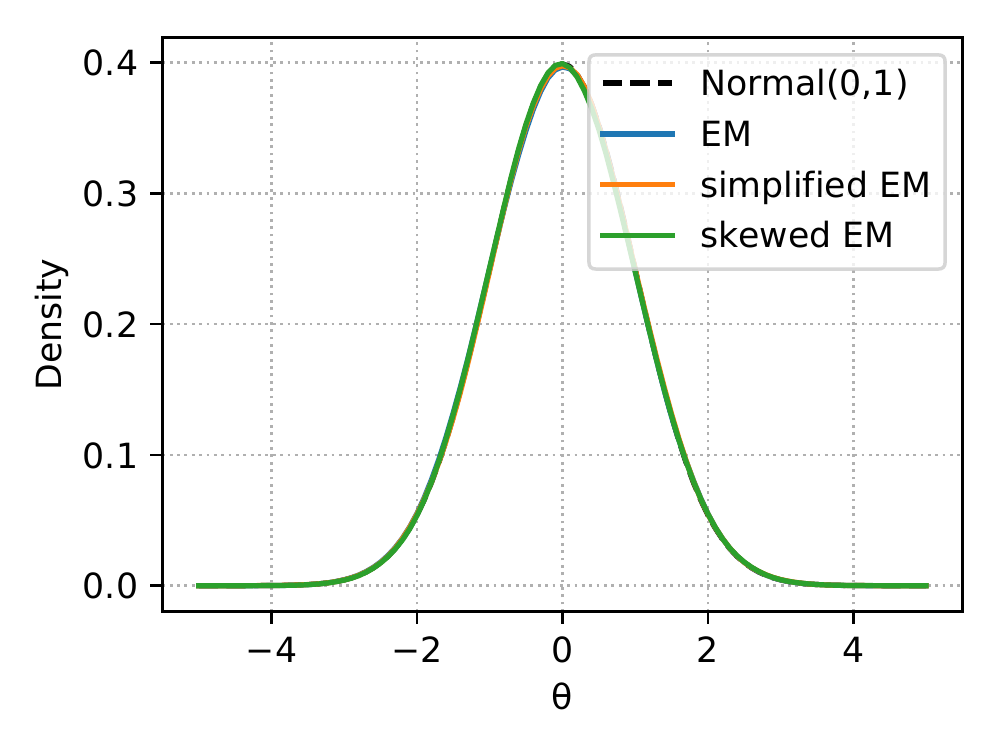}
  \includegraphics[width=0.32\linewidth]{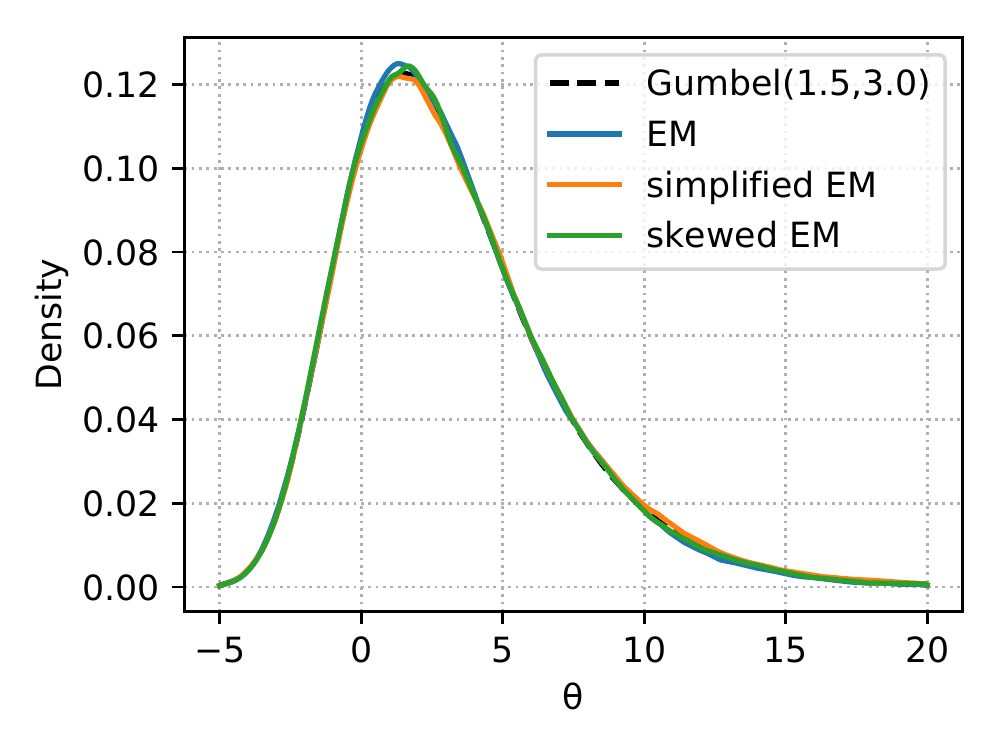}
  \includegraphics[width=0.32\linewidth]{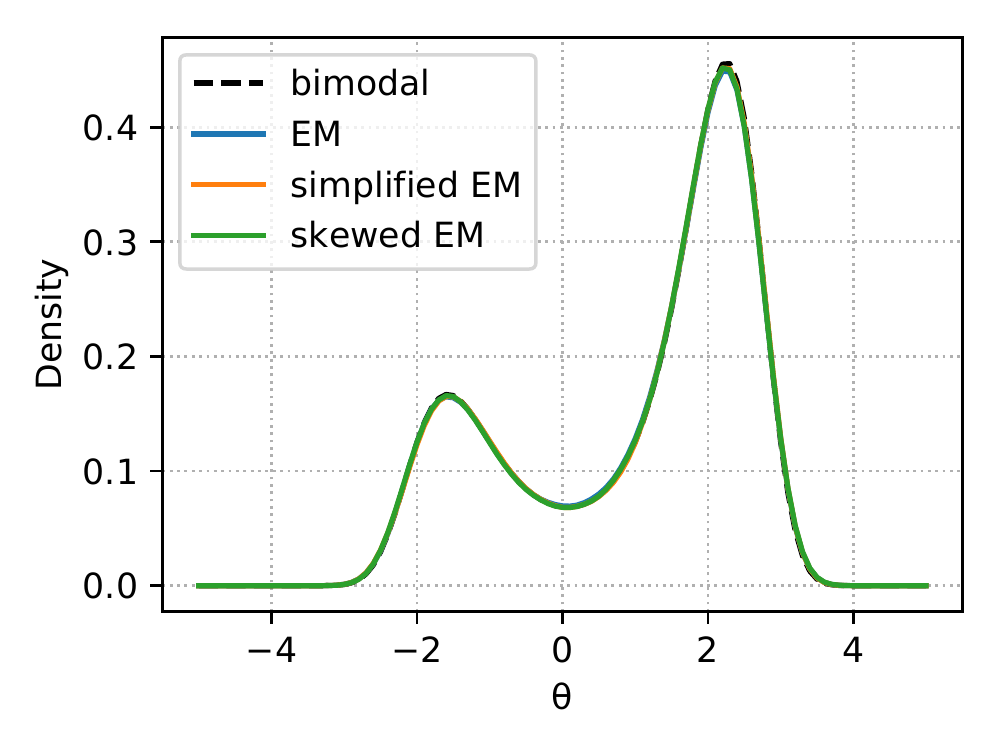}
  \caption{
    Sampling results of ULA with the numerical schemes.
  }
  \label{fig:example_sampling}
\end{center}\end{figure*}

\subsection{Example}
\autoref{fig:example_scheme} shows the mean error of the EM, simplified EM, and skewed EM schemes.
We observed that small stepsize improves the error almost linearly, and there was no significant difference between the skewed EM scheme and the other weak order $1$ schemes in these simple examples.

The noise was generated by, for the simplified EM scheme,
\begin{equation}
    P\left(\erveta^d=+1\right) = 0.5, \ \ \
    P\left(\erveta^d=-1\right) = 0.5.
\end{equation}
and for the skewed EM scheme,
\begin{equation}
    P\left(\ervxi^d=\sqrt{\frac{3}{2}}\right) = 0.4, \ \ \
    P\left(\ervxi^d=-\sqrt{\frac{2}{3}}\right) = 0.6.
\end{equation}

The error is given by
\begin{equation}
    \left| \E[\rx_T] - \E[\ry_K] \right|,
\end{equation}
where $t_0=0$, $T=1$, and $K=T/\epsilon$.
The expected value was approximated by 10,000 samples for each setting.
The target SDEs are explicitly solvable ones:
\begin{enumerate}
\item Let $\rx\in\R$, $\rx_0 = 0.1$, and $\rvw\in\R^2$, 
\begin{equation}
    d\rx_t = \frac{3}{2}\rx_t dt + \left(\frac{1}{10}\rx_t \ \frac{1}{10}\rx_t\right) d\rvw_t.
\end{equation}
\item Let $\rx\in\R$, $\rx_0 = -1.0$, and $\rvw\in\R$,
\begin{equation}
    d\rx_t = (\rx_t+2)dt + 3d\rw_t.
\end{equation}
\item Let $\rx\in\R$, $\rx_0 = -1.0$, and $\rvw\in\R$,
\begin{equation}
    d\rx = -\mathrm{tanh}(\rx)\left(1+\frac{1}{2}\mathrm{sech}^2(\rx)\right)dt + \mathrm{sech}(\rx)d\rw_t.
\end{equation}
\end{enumerate}

\section{Bayesian sampling by weak schemes}\label{sec:sampling}
In this section, we introduce three Bayesian sampling methods that correspond to the EM, simplified EM, skewed EM schemes of the Langevin dynamics.
One results in an existing algorithm and the others are novel in that they adapt to non-Gaussian noise as the diffusion, guaranteeing convergence to the target distribution without Metropolis-Hastings rejection.

Let us consider continuous-time Langevin dynamics
\begin{equation} \label{equ:LD_process}
    d\rvtheta_t = \nabla \log \pi(\rvtheta_t) dt + \sqrt{2} d\rvw_t,
\end{equation}
where $\rvtheta_t$ is the $\R^D$-stochastic process, $\rvw_t$ is the $\R^D$-standard Wiener process, and $\pi(\rvtheta)$ is a target joint probability, e.g., Bayesian posterior.
Let $p(\rvtheta)$ be a stationary distribution of the process $\rvtheta_t$.
From the Fokker--Planck equation, the steady state is known to follow the target distribution
\begin{equation}
    p(\rvtheta) = \pi(\rvtheta).
\end{equation}
One can derive a discrete-time algorithm whose samples follow the target distribution
\begin{equation}
    \{\rvtheta_k\}_{k=1}^K \sim \pi(\rvtheta)
\end{equation}
by applying the numerical schemes to \autoref{equ:LD_process}.

\subsection{Review: sampling by EM scheme}
The well known unadjusted Langevin algorithm~(ULA) is the EM scheme of \autoref{equ:LD_process}
\begin{equation}\label{equ:alg_ULA}
  \rvtheta_{k+1} = \rvtheta_k + \epsilon \nabla \log \pi(\rvtheta_k) + \sqrt{2\epsilon} \rvzeta_k,
\end{equation}
where $\rvzeta_k\sim\mathcal{N}(\vzero,\mI_D)$.
The SGLD algorithm uses stochastic gradient $\widehat{\nabla} \log \pi(\rvtheta_k)$ instead of exact gradient $\nabla \log \pi(\rvtheta_k)$ to avoid iteration over large dataset.

According to \citet{Sato:2014xy}, the weak approximation of SGLD remains the same order $1$, however it does not conform to the strong (path-wise) approximation of the EM scheme.
The noise of stochastic gradients breaks the consistency of the EM scheme by affecting its sample path even in the infinitesimal stepsize limit.
Fortunately, this is not an issue for Bayesian sampling that is interested in the distribution of the time-discrete approximation.
In this sense, the EM scheme in SGLD can be unnecessarily strong and altered by some weak alternatives.

\subsection{Proposal: sampling by simplified and skewed scheme}
Let us consider algorithmic counterparts of the simplified and skewed EM schemes for Bayesian sampling.
By applying the schemes to \autoref{equ:LD_process}, we obtain a simplified ULA algorithm:
\begin{equation}
    \rvtheta_{k+1} = \rvtheta_k + \epsilon \nabla \log \pi(\rvtheta_k) + \sqrt{2\epsilon} \rveta_k,
\end{equation}
where
\begin{equation}
    \mathbb{E}[(\erveta^d)^p] = \begin{cases}
        0   & (p = 1,3)\\
        1   & (p = 2),
    \end{cases}
\end{equation}
and a skewed ULA algorithm:
\begin{equation}
    \rvtheta_{k+1} = \rvtheta_k + \epsilon \nabla \log \pi(\rvtheta_k) + \sqrt{2\epsilon} \rvxi_k,
\end{equation}
where
\begin{equation}
    \mathbb{E}[(\ervxi^d)^p] = \begin{cases}
        0   & (p = 1)\\
        1   & (p = 2).
    \end{cases}
\end{equation}
The weak convergence is given by \autoref{thm:main}, which guarantees sampling accuracy without a rejection step.

\subsection{Example}
Figure\,\ref{fig:example_sampling} illustrates the sampling results of the ULA algorithm, with the EM, simplified EM, and skewed EM schemes, for the Gaussian, Gumbel, and a bimodal distributions.
These are examples of asymmetric diffusion which result in accurate sampling.

The density function of the bimodal distribution was given by
\begin{equation}\label{equ:bimodal_pdf}
    \pi(x) \propto \exp \left( - \frac{(x-3)(x-1)(x+1)(x+2)}{10} \right).
\end{equation}

\begin{table*}[t!]
\caption{Comparison of SGD Interpretations as Continuous-Time Dynamics}\label{tab:comp_SGDinterpret}
\begin{center}
\begin{tabular}{lcccc}
\toprule
    &\textbf{Gradient Noise} &\textbf{Loss Function} &\textbf{Posterior} &\textbf{Process}\\
\midrule
This paper            & Finite moment & Any       & Any       & It\^o process\\
\cite{Mandt:2017aa}   & Gaussian      & Quadratic & Gaussian  & OU process\\
\cite{Maddox:2019aa}  & Gaussian      & Quadratic & Gaussian  & OU process\\
\bottomrule
\end{tabular}
\end{center}\end{table*}

\section{Bayesian interpretation of SGD}\label{sec:SGD}
In this section, we present a new Bayesian interpretation of SGD using the skewed EM scheme, demonstrating the flexibility of modeling by eliminating most assumptions of the previous studies at the same time, as shown in Table\,\ref{tab:comp_SGDinterpret}.
Whereas the gradient noise in SGD is the only source of randomness, the skewed EM scheme requires noise as a diffusion term.
We fill the gap here.

The SGD algorithm is well known as
\begin{equation}\label{equ:alg_SGD}
    \vtheta_{k+1} = \vtheta_k - \epsilon \widehat{\nabla}L(\vtheta_k),
\end{equation}
where $\widehat{\nabla}$ is stochastic gradient and $L(\vtheta)$ is a loss function, e.g., error of a softmax classifier of a neural network with L1 regularization.
If we see $L(\vtheta)$ as energy of state $\vtheta$ and consider a Gibbs distribution
\begin{equation}
    \pi(\vtheta) \propto \exp(- L(\vtheta)),
\end{equation}
which corresponds to the Bayesian posterior, it results in a stochastic optimization for the maximum a posteriori (MAP) point estimate of $\pi(\vtheta)$
\begin{equation}\label{equ:alg_SGD_Bayes}
    \rvtheta_{k+1} = \rvtheta_k + \epsilon \widehat{\nabla} \log \pi(\rvtheta_k)
\end{equation}
with gradient noise $\rvdelta$
\begin{equation} \label{equ:gradnoise}
    \rvdelta = \widehat{\nabla} \log \pi(\rvtheta) - \nabla \log \pi(\rvtheta).
\end{equation}

\subsection{Review: SGD as an Ornstein-Uhlenbeck process}
\citet{Mandt:2017aa} and \citet{Maddox:2019aa} assumed gradient noise and a loss function as follows to regard \Eqref{equ:alg_SGD_Bayes} as an EM scheme.
\begin{assumption}[\cite{Mandt:2017aa}]\label{asm:Blei_noise}
    Gradient noise $\rvdelta$ follows a Gaussian distribution $\rvdelta \sim \mathcal{N}(\vzero,\mB\mB^\top)$ where $\mB\mB^\top$ is independent of $\rvtheta$ and full rank.
\end{assumption}
\begin{assumption}[\cite{Mandt:2017aa}]\label{asm:Blei_loss}
    The loss function is approximated by a quadratic function
    \begin{equation}
        L(\vtheta) \approx \frac{1}{2} \rvtheta^\top \mA\rvtheta,
    \end{equation}
    where $\mA$ is the Hessian at the optimum and positive definite.
\end{assumption}
Under such gradient noise and loss functions, they have shown that SGD with constant stepsize $\epsilon$ can be regarded as an EM scheme of a limited class of stochastic process.
\begin{theorem}[\cite{Mandt:2017aa}]
    Let Assumptions\,\ref{asm:Blei_noise} and \ref{asm:Blei_loss} be satisfied.
    Then the SGD trajectory $\{\rvtheta_k\}_{k=1}^K$ is a weak order 1 time-discrete approximation of the multivariate Ornstein-Uhlenbeck process
    \begin{equation}
        d\rvtheta_t = - \epsilon A \rvtheta_t dt + \frac{1}{\sqrt{S}}\epsilon \mB d\rvw_t.
    \end{equation}
    The stationary distribution is a Gaussian distribution
    \begin{equation}
        p(\rvtheta) \propto \exp\left(-\frac{1}{2}\rvtheta^\top \Sigma^{-1} \rvtheta\right),
    \end{equation}
    where $\Sigma\mA + \mA\Sigma = \frac{\epsilon}{S}\mB\mB^\top$.
\end{theorem}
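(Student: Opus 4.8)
The plan is to collapse the two assumptions to an explicit linear SDE, recognize the SGD recursion as its Euler--Maruyama (EM) discretization, invoke the weak-order-$1$ guarantee reviewed in Section~\ref{sec:scheme}, and then read off the stationary law from the Fokker--Planck equation; the bulk of the argument is bookkeeping rather than a single hard step.

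\emph{Step 1 (reduce to a linear SDE).} Under \autoref{asm:Blei_loss} we have $\nabla\log\pi(\rvtheta)=-\mA\rvtheta$, so the drift in \eqref{equ:alg_SGD_Bayes} is linear. Under \autoref{asm:Blei_noise} the gradient noise \eqref{equ:gradnoise}, which for a size-$S$ minibatch carries the $1/\sqrt{S}$ Monte-Carlo factor, can be written $\rvdelta_k=\tfrac{1}{\sqrt S}\mB\rvzeta_k$ with $\rvzeta_k\sim\mathcal N(\vzero,\mI_D)$; since $\mB\mB^\top$ does not depend on $\rvtheta$ and successive minibatches are drawn independently, the $\rvzeta_k$ are i.i.d.\ and independent of $\rvtheta_k$, and their sign is immaterial by symmetry. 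Hence the SGD trajectory obeys
\begin{equation}
    \rvtheta_{k+1}=\rvtheta_k-\epsilon\mA\rvtheta_k+\tfrac{\epsilon}{\sqrt S}\mB\rvzeta_k .
\end{equation}
Comparing with \eqref{equ:EM}, this is exactly the EM scheme of the Ornstein--Uhlenbeck SDE $d\rvtheta_t=-\epsilon\mA\rvtheta_t\,dt+\tfrac{\epsilon}{\sqrt S}\mB\,d\rvw_t$, whose linear drift and constant diffusion satisfy \autoref{asm:common} and \autoref{asm:milstein}. The one delicate point, which I expect to be the main obstacle, is that since \emph{both} coefficients carry a factor $\epsilon$, the EM step reproducing the recursion has unit size; to make ``weak order $1$ in $\epsilon$'' well posed one rescales time $t\mapsto t/\epsilon$, turning the SDE into $d\rvtheta_s=-\mA\rvtheta_s\,ds+\sqrt{\epsilon/S}\,\mB\,d\rvw_s$, whose EM discretization with step $\epsilon$ is the displayed recursion; the standard weak-order-$1$ bound then gives $\left|\E[f(\rvtheta_{t_k})]-\E[f(\rvtheta_k)]\right|\le C\epsilon$. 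Note no central-limit argument is needed, since the Gaussianity of the minibatch noise is granted by \autoref{asm:Blei_noise}.

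\emph{Step 2 (stationary distribution).} The limiting process is a linear OU process with drift matrix $\epsilon\mA$ (symmetric and positive definite by \autoref{asm:Blei_loss}, hence Hurwitz) and constant diffusion covariance $\tfrac{\epsilon^2}{S}\mB\mB^\top$ (positive definite by the full-rank part of \autoref{asm:Blei_noise}). Inserting the Gaussian ansatz $p(\rvtheta)\propto\exp(-\tfrac12\rvtheta^\top\Sigma^{-1}\rvtheta)$ into the stationary Fokker--Planck equation
\begin{equation}
    0=\epsilon\,\nabla\cdot\big(\mA\rvtheta\,p(\rvtheta)\big)+\frac{\epsilon^2}{2S}\sum_{i,j}(\mB\mB^\top)_{ij}\,\partial_i\partial_j p(\rvtheta)
\end{equation}
and matching the quadratic form in the exponent collapses this to the Lyapunov equation $\epsilon\mA\Sigma+\epsilon\Sigma\mA=\tfrac{\epsilon^2}{S}\mB\mB^\top$, i.e.\ $\Sigma\mA+\mA\Sigma=\tfrac{\epsilon}{S}\mB\mB^\top$, which under the stated positivity has a unique positive-definite solution $\Sigma$. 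Uniqueness of the stationary law follows from ergodicity of the OU process, which completes the proof.
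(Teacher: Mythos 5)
Your proposal is correct, but note that the paper offers no proof of this statement at all: it is quoted as a review of \citet{Mandt:2017aa}, so the only thing to compare against is that original derivation, and your argument follows essentially the same route (identify the recursion as the EM step of a linear OU SDE, then solve the stationary Fokker--Planck equation via a Gaussian ansatz to get the Lyapunov equation $\Sigma\mA+\mA\Sigma=\frac{\epsilon}{S}\mB\mB^\top$). Two things you do are genuinely useful repairs of the statement as printed here. First, \autoref{asm:Blei_noise} as stated gives $\rvdelta\sim\mathcal N(\vzero,\mB\mB^\top)$ with no $1/S$, while the displayed SDE carries $\frac{1}{\sqrt S}\epsilon\mB$; your reading of the $1/\sqrt S$ as the minibatch averaging factor is the intended one and is needed for the theorem to be self-consistent. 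Second, you correctly identify that matching the SGD recursion to an EM step of the displayed SDE forces a unit discretization step, which makes ``weak order $1$'' vacuous as written; your time rescaling to $d\rvtheta_s=-\mA\rvtheta_s\,ds+\sqrt{\epsilon/S}\,\mB\,d\rvw_s$ (the same process up to a deterministic time change, hence the same stationary law) is the standard way to make the order statement meaningful, and your Fokker--Planck computation and the uniqueness argument via positive definiteness of $\mA$ and $\mB\mB^\top$ are sound. No gaps.
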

\citet{Maddox:2019aa} have reported that \autoref{asm:Blei_noise} and \autoref{asm:Blei_loss} are not often satisfied by SGD in real-world problems e.g., VGG-16 and PreResNet-164 networks on CIFAR-100 dataset.
For partial justification, they assumed infinite data points for \autoref{asm:Blei_noise} and extended the SGD algorithm to use stochastic weight averaging so as to capture the loss geometry for \autoref{asm:Blei_loss}.
Whereas these studies have pioneered the interpretation of SGD as continuous-time dynamics, it still lacks generality.
Moreover, the Gaussian constraint of stationary distribution remains unresolved.

\subsection{Proposal: SGD as an It\^o process}
We regard \Eqref{equ:alg_SGD_Bayes} as a skewed EM scheme under the following acceptable assumption about gradient noise.
The moment conditions cannot be less than the first and second moments (\autoref{lem:impossibleEM}).
However we show in \autoref{thm:SGD_SLA} that the first and second moments can be relaxed to take any constant values.
This helps the interpretation of SGD with various gradient noise.
\begin{assumption}\label{asm:our_noise}
    Gradient noise $\rvdelta$ is a random vector with component $\ervdelta^d$ satisfying a moment condition
    \begin{equation}
        \mathbb{E}[\rvdelta] = \vm_1, \ \ \
        \mathbb{E}[\rvdelta^{\circ 2}] = \vm_2,
    \end{equation}
    where $\circ2$ is an element-wise square and $\evm_1^d, \evm_2^d$ are some constants for $d=1,\dots,D$.
\end{assumption}
The higher moments $(p\geq3)$ are not constrained in our formulation.
There is no additional assumption on loss function than \autoref{asm:common} and \ref{asm:milstein}.
Then the SGD algorithm can be regarded as a skewed EM scheme of more general class of a stochastic process.
\begin{theorem}\label{thm:SGD_SLA}
    Let \autoref{asm:our_noise} be satisfied.
    Then the SGD trajectory $\{\rvtheta_k\}_{k=1}^K$ is a weak order 0.5 time-discrete approximation of the It\^o process
    \begin{equation}\label{equ:SLA_SGD_Ito}
        d\rvtheta_t = \left(\nabla \log \pi(\rvtheta_t) + \vm_1\right) dt + \sqrt{2} \mR d\rvw_t,
    \end{equation}
    where $\mR = \sqrt{\frac{\epsilon}{2}} \diag{\left(\vm_2-\vm_1^{\circ2}\right)}^\frac{1}{2}$.
    The stationary distribution is a modified Gibbs distribution of given loss function $L(\rvtheta)$
    \begin{equation}\label{equ:SLA_SGD_stationary}
        p(\rvtheta) \propto \pi(\rvtheta)^{\frac{\Tr{(\mR^{-2})}}{D}} \exp{\left( \vm_1^\top \mR^{-2} \rvtheta \right)}^\frac{1}{D}.
    \end{equation}
\end{theorem}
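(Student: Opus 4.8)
The statement bundles two claims, handled in turn: \emph{(i)} the trajectory of \autoref{equ:alg_SGD_Bayes} is a weak order $0.5$ approximation of \autoref{equ:SLA_SGD_Ito}, which I would obtain by recognizing the update as an instance of the skewed EM scheme and applying \autoref{thm:main}; \emph{(ii)} \autoref{equ:SLA_SGD_stationary} is the stationary density of \autoref{equ:SLA_SGD_Ito}, which I would get from its Fokker--Planck equation.

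\emph{Step 1: reduction to the skewed EM scheme.} Using \autoref{equ:gradnoise}, rewrite \autoref{equ:alg_SGD_Bayes} as
\[
  \rvtheta_{k+1} = \rvtheta_k + \epsilon\bigl(\nabla\log\pi(\rvtheta_k) + \vm_1\bigr) + \epsilon\bigl(\rvdelta_k - \vm_1\bigr).
\]
By \autoref{asm:our_noise} the centered noise $\rvdelta_k-\vm_1$ has component first moment $0$ and second moment $\evm_2^d-(\evm_1^d)^2$, so---excluding the degenerate case $\evm_2^d=(\evm_1^d)^2$---the normalized vector $\rvxi_k:=\diag(\vm_2-\vm_1^{\circ 2})^{-1/2}(\rvdelta_k-\vm_1)$ has component first moment $0$ and second moment $1$, i.e. it satisfies exactly the hypothesis \autoref{equ:moment_skewEM} of \autoref{thm:main}. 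Since $\epsilon\,\diag(\vm_2-\vm_1^{\circ 2})^{1/2}=\sqrt{\epsilon}\,(\sqrt{2}\,\mR)$, the update becomes
\[
  \rvtheta_{k+1} = \rvtheta_k + \epsilon\,\va(\rvtheta_k) + \sqrt{\epsilon}\,(\sqrt{2}\,\mR)\,\rvxi_k,\qquad \va(\rvtheta):=\nabla\log\pi(\rvtheta)+\vm_1,
\]
which is precisely the skewed EM scheme \autoref{equ:skewedEM} applied to the SDE $d\rvtheta_t=\va(\rvtheta_t)\,dt+\sqrt{2}\,\mR\,d\rvw_t$, i.e. \autoref{equ:SLA_SGD_Ito}. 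The diffusion matrix $\sqrt{2}\,\mR$ is constant, so \autoref{asm:common} and \autoref{asm:milstein} hold for it trivially, while $\va$ inherits smoothness, the Lipschitz property and the $\kappa$-th growth condition from $L$ (since $\nabla\log\pi=-\nabla L$). Hence \autoref{thm:main} applies and yields $|\E[f(\rvtheta_{t_k})]-\E[f(\rvtheta_k)]|\le C\sqrt{\epsilon}$, i.e. weak order $0.5$.

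\emph{Step 2: stationary density.} For \autoref{equ:SLA_SGD_Ito} the effective diffusion matrix is $\mR^2$ (half of $(\sqrt{2}\,\mR)(\sqrt{2}\,\mR)^\top$; diagonal and constant), so a stationary density $p$ solves the Fokker--Planck equation $0=-\nabla\cdot(\va\,p)+\nabla\cdot(\mR^2\nabla p)$. I would look for a vanishing-probability-current solution $\mR^2\nabla p=\va\,p$, i.e. $\nabla\log p=\mR^{-2}(\nabla\log\pi+\vm_1)$. When $\mR\propto\mI$ this integrates directly to a modified Gibbs form; for general diagonal $\mR$ I would integrate the $d$-th equation in the $d$-th coordinate (reading $\log\pi$ as the antiderivative of $\partial_d\log\pi$), obtaining a per-coordinate stationary factor $p_d\propto\pi^{(\mR^{-2})_{dd}}\exp\bigl((\mR^{-2})_{dd}\,\evm_1^d(\rvtheta)_d\bigr)$, and then combine them as a normalized geometric mean $p\propto(\prod_d p_d)^{1/D}$; the $\log\pi$-exponents sum to $\Tr(\mR^{-2})/D$ and the affine parts to $\tfrac1D\vm_1^\top\mR^{-2}\rvtheta$, reproducing \autoref{equ:SLA_SGD_stationary}.

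\emph{Expected main obstacle.} Step 2 is the delicate one: because $\mR^{-2}$ is not a multiple of $\mI$, the field $\mR^{-2}\nabla\log\pi$ is not an exact gradient for general $\pi$, so there is no exact product/Gibbs stationary law and the zero-current ansatz does not literally annihilate the multivariate Fokker--Planck operator. Making \autoref{equ:SLA_SGD_stationary} precise therefore means spelling out in what (coordinate-averaged) sense it is the stationary density---via the per-coordinate dynamics and the geometric combination above---or alternatively verifying directly, under the operative assumptions, that the claimed $p$ solves the Fokker--Planck equation. A minor point in Step 1 is to record that $\rvxi_k$ is independent across $k$ and of $\rvtheta_k$, as the weak-approximation framework behind \autoref{thm:main} presumes.
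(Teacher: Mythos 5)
Your Step 1 is essentially the paper's argument: the paper likewise decomposes $\rvdelta = \vm_1 + \diag(\vm_2-\vm_1^{\circ2})^{1/2}\rvxi$ with $\E[\ervxi^d]=0$, $\E[(\ervxi^d)^2]=1$, rewrites the update as $\rvtheta_{k+1} = \rvtheta_k + \epsilon(\nabla\log\pi(\rvtheta_k)+\vm_1) + \sqrt{2\epsilon}\,\mR\rvxi_k$, and invokes \autoref{thm:main}. Your added remarks (excluding the degenerate case $\evm_2^d=(\evm_1^d)^2$, independence of $\rvxi_k$ across iterations and from $\rvtheta_k$, and the verification that the new drift inherits the regularity assumptions) are points the paper leaves implicit, and are worth recording.

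For Step 2 you take a genuinely different route: you work directly with the Fokker--Planck equation in the $\rvtheta$ coordinates and a zero-probability-current ansatz, whereas the paper first applies the Lamperti transform $\rvvarphi=\mR^{-1}\rvtheta$ to normalize the diffusion, writes $\nabla_\rvvarphi\log p(\rvvarphi)=\mR^{-1}(\nabla_\rvtheta\log\pi(\rvtheta)+\vm_1)$, integrates this relation to get $\Tr(\mR^{-2})\log\pi(\mR\rvvarphi)+(\mR^{-1}\vm_1)\cdot\rvvarphi$ (up to the factor $D$), and changes variables back. The two routes are equivalent in substance, and --- importantly --- the obstacle you flag is real and is \emph{not} resolved by the paper's version either: the identity $\int \mR^{-2}\nabla_\rvvarphi\log\pi(\mR\rvvarphi)\cdot d\rvvarphi = \Tr(\mR^{-2})\log\pi(\mR\rvvarphi)$ used in the paper requires the vector field $\mR^{-2}\nabla\log\pi$ to be an exact gradient with that particular potential, which holds only when $\mR^{-2}$ is a scalar multiple of $\mI$ (or under a coordinate-averaging convention of exactly the kind you describe with your geometric-mean construction; the $1/D$ exponents in \autoref{equ:SLA_SGD_stationary} are the trace of that convention). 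So your per-coordinate integration plus normalized geometric mean is arguably a more honest account of what \autoref{equ:SLA_SGD_stationary} means than the paper's line-integral computation; neither argument establishes that the stated $p(\rvtheta)$ annihilates the multivariate Fokker--Planck operator for a general non-isotropic $\mR$ and general $\pi$. Your proposal is therefore as correct as the paper's proof on this point, and more explicit about where the argument is only heuristic.
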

\begin{proof}
    See \autoref{app:proof_thm:SGD_SLA}.
\end{proof}

\autoref{thm:SGD_SLA} suggests that SGD can be viewed as a time-discrete approximation of corresponding It\^o process without Gaussian noise and quadratic loss \autoref{asm:Blei_noise} and \ref{asm:Blei_loss}.
Moreover, we can find a continuous-time counterpart in SDE even when gradient noise is biased and has non-standardized variance.

It is also implied that the appropriate scaling of gradient noise leads a steady state to the exact Gibbs distribution of a given loss function.
The following corollary is immediately obtained from \autoref{thm:SGD_SLA}.
\begin{corollary}\label{cor:appropriate_scaling}
    Let $m_1^d = 0$ and $m_2^d = \frac{2}{\epsilon}$ for $d=1,\dots,D$, so that the SGD trajectory $\{\rvtheta_k\}_{k=1}^K$ is a weak order 0.5 time-discrete approximation of the It\^o process
    \begin{equation}\label{equ:SLA_SGD_Ito_std}
        d\rvtheta_t = \nabla \log \pi(\rvtheta_t) dt + \sqrt{2} d\rvw_t.
    \end{equation}
    Then the stationary distribution is 
    \begin{equation}
        p(\rvtheta) \propto \pi(\rvtheta).
    \end{equation}
\end{corollary}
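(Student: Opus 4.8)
The plan is to obtain the corollary as a direct specialization of \autoref{thm:SGD_SLA}: no new stochastic‑analysis input is needed, and every step is a substitution of the prescribed moment values into formulas already established there. First I would record the associated constant vectors, $\vm_1 = \vzero$ and $\vm_2 = \tfrac{2}{\epsilon}\vone$, and check that \autoref{asm:our_noise} is satisfied; in particular $\vm_2 - \vm_1^{\circ 2} = \tfrac{2}{\epsilon}\vone$ has strictly positive entries, so $\diag\!\left(\vm_2-\vm_1^{\circ2}\right)^{1/2}$ is well defined and the conclusions of \autoref{thm:SGD_SLA} apply verbatim.

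Next I would evaluate the diffusion matrix appearing in \autoref{thm:SGD_SLA}:
\begin{equation}
    \mR = \sqrt{\tfrac{\epsilon}{2}}\,\diag\!\left(\vm_2 - \vm_1^{\circ 2}\right)^{1/2}
        = \sqrt{\tfrac{\epsilon}{2}}\cdot\sqrt{\tfrac{2}{\epsilon}}\;\mI_D = \mI_D .
\end{equation}
Substituting $\mR = \mI_D$ and $\vm_1 = \vzero$ into \autoref{equ:SLA_SGD_Ito}, the drift collapses to $\nabla\log\pi(\rvtheta_t)$ and the diffusion term to $\sqrt{2}\,d\rvw_t$, which is exactly \autoref{equ:SLA_SGD_Ito_std}; the weak‑order‑$0.5$ claim is inherited unchanged from \autoref{thm:SGD_SLA}. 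Finally, plugging $\mR^{-2} = \mI_D$ (hence $\Tr(\mR^{-2}) = D$) and $\vm_1 = \vzero$ into \autoref{equ:SLA_SGD_stationary} gives exponent $\Tr(\mR^{-2})/D = 1$ on $\pi(\rvtheta)$ and $\vm_1^\top\mR^{-2}\rvtheta = 0$ in the exponential factor, so $p(\rvtheta) \propto \pi(\rvtheta)$, as claimed.

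Since each step is a substitution, there is essentially no obstacle; the only point deserving care is confirming that this scaling keeps $\mR$ a legitimate (real, nonsingular) diffusion matrix, so that the hypotheses behind \autoref{thm:SGD_SLA} — and Assumptions \ref{asm:common} and \ref{asm:milstein} underneath it — remain in force. It is also worth remarking conceptually that $m_1^d = 0$, $m_2^d = 2/\epsilon$ is the unique element‑wise scaling of the gradient noise that turns the SGD iteration \autoref{equ:alg_SGD_Bayes} into a weak approximation of the Langevin dynamics \autoref{equ:LD_process}, whose stationary law is the untempered posterior $\pi$; this is the takeaway the corollary is meant to isolate.
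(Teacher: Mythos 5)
Your proposal is correct and follows exactly the route the paper intends: the paper gives no separate proof, stating only that the corollary is ``immediately obtained'' from \autoref{thm:SGD_SLA}, and your substitution of $\vm_1=\vzero$, $\vm_2=\tfrac{2}{\epsilon}\vone$ to get $\mR=\mI_D$ and hence $p(\rvtheta)\propto\pi(\rvtheta)$ is precisely that computation, made explicit.
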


\begin{figure*}[t!]\begin{center}
  \includegraphics[width=0.32\linewidth]{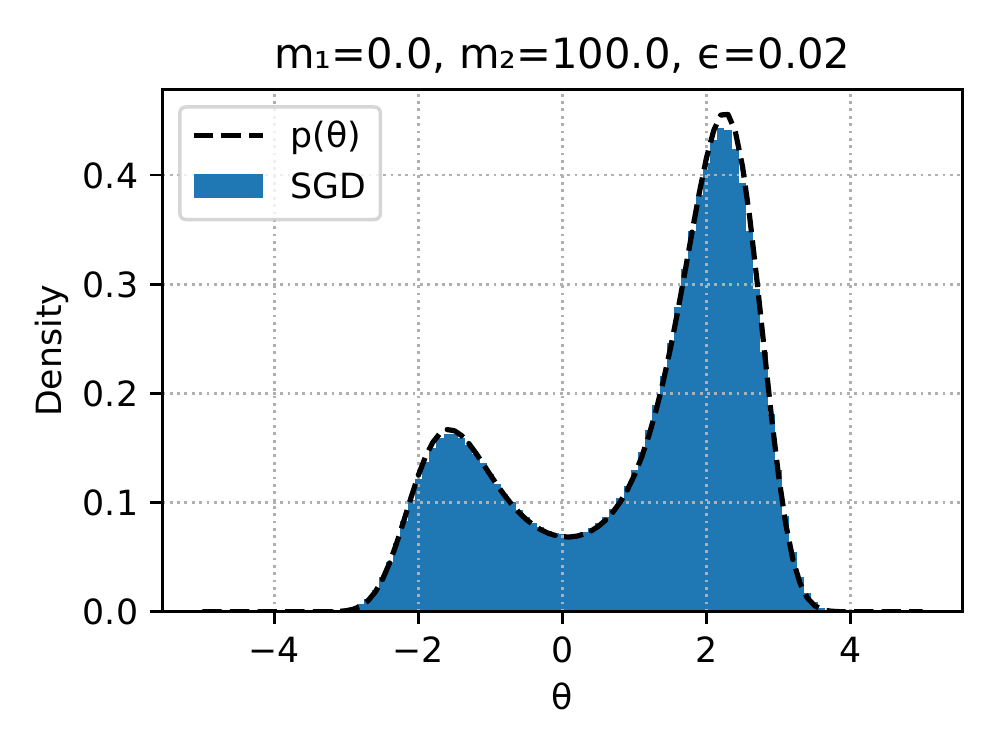}
  \includegraphics[width=0.32\linewidth]{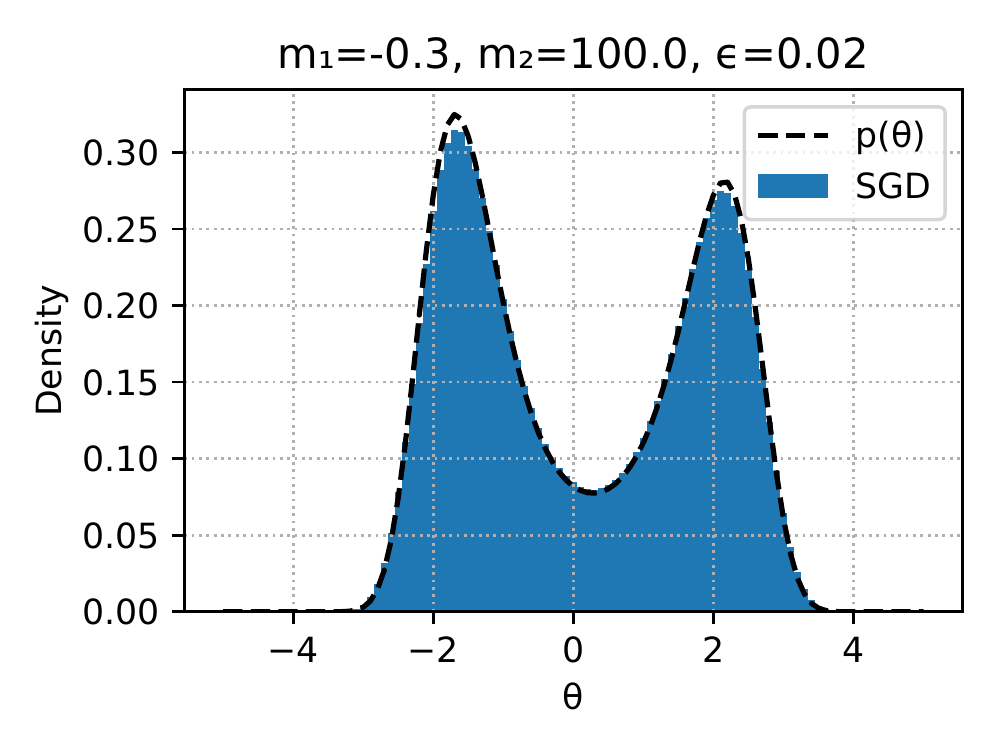}
  \includegraphics[width=0.32\linewidth]{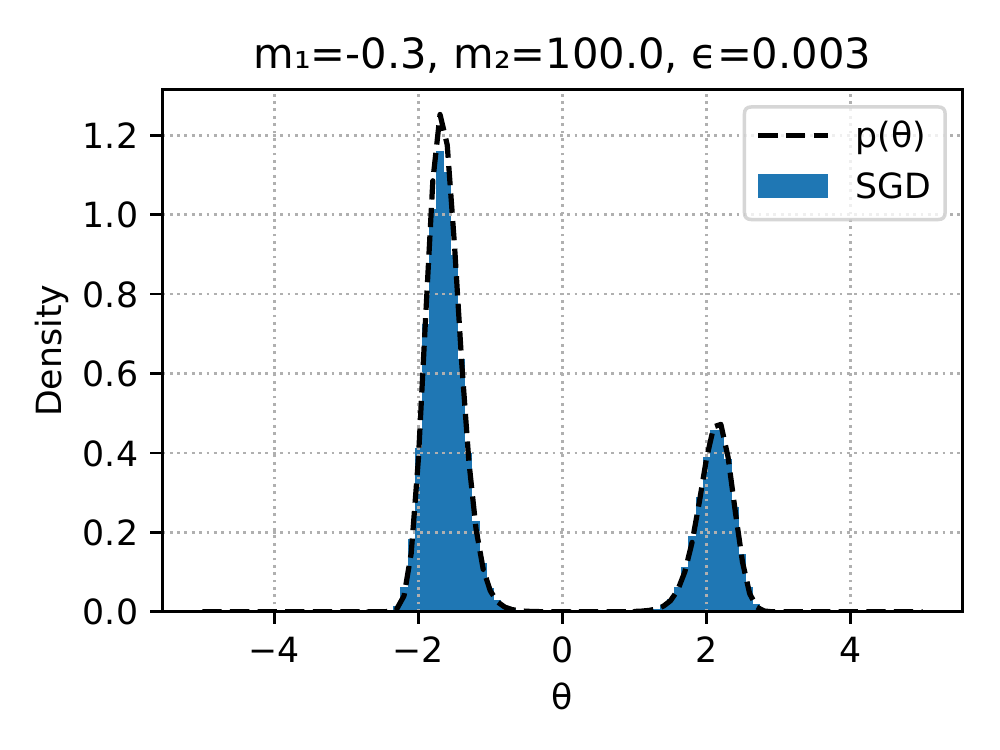}
  \caption{
    Analytic stationary distribution $p(\rvtheta)$ and histogram of SGD trajectory.
  }
  \label{fig:example_SGD}
\end{center}\end{figure*}

\subsection{Effect of stepsize}\label{subsec:effect_stepsize}
It is well known practice that by reducing stepsize, SGD converges to a certain value.
One can confirm the phenomenon from the viewpoint of the stationary distribution of the SDE using \autoref{thm:SGD_SLA}.
Let moments $m_1^d\neq0$ and $m_2^d\neq0$.
As seen in \autoref{equ:SLA_SGD_stationary}, stepsize $\epsilon>0$ affects the stationary distribution.
Small $\epsilon$ increases thermodynamic beta of the system
\begin{equation}
    \beta = {\frac{\Tr{(\mR^{-2})}}{D}} = \frac{2}{D\epsilon}\sum_{d=1}^D\frac{1}{m_2^d-(m_1^d)^2}, 
\end{equation}
so that the stationary distribution becomes sharp at its mode.
This corresponds to the convergence of the SGD optimization near the optimum.
However, small $\epsilon$ also amplifies bias
\begin{equation}\begin{aligned}
    \exp&{\left( \vm_1^\top \mR^{-2} \rvtheta \right)}^\frac{1}{D} = \exp{\left( \vm_1^\top \diag{\left(\vm_2-\vm_1^{\circ2}\right)}^{-1} \rvtheta \right)}^\frac{2}{D\epsilon}.
\end{aligned}\end{equation}
It shifts the mode of the stationary distribution apart from the loss optimum.

We cannot run the SGD algorithm for an infinite time with an infinitesimal stepsize in practice, however, we can consider it by using the time evolution of the continuous-time dynamics.
At the limit of $t\to\infty$ and $\epsilon\to 0$, the stationary distribution is the delta function centered at infinity
\begin{equation}
    \lim_{\epsilon\to0} \beta = \infty, \ \ \ \lim_{\epsilon\to0} \left|\exp{\left( \vm_1^\top \mR^{-2} \rvtheta \right)}^\frac{1}{D}\right| = \infty.
\end{equation}
These indicate pros and cons of reducing stepsize in optimization by SGD.
It is implied that SGD with small stepsize can results in a biased outcome even though the training loop is long enough.
This has a negative impact on performance.

\subsection{Example}
\autoref{fig:example_SGD} numerically illustrates \autoref{thm:SGD_SLA}.
The distribution of parameter $\rtheta$ was observed by changing gradient bias $m_1$ and stepsize $\epsilon$.
The analytic stationary distribution of \autoref{equ:SLA_SGD_stationary} was accurate and almost identical to the histogram of SGD trajectory.
It also confirms the discussion of Section\,\ref{subsec:effect_stepsize}.
Even a slight change of the gradient bias $m_1$ can significantly affect the convergence point.
Reducing the stepsize $\epsilon$ makes the distribution sharp and magnifies  biased modes.
These distributions are also examples of the highly non-Gaussian stationary distributions.

The loss function was given by 
\begin{equation}
    L(\rtheta) = -\frac{(\rtheta-3)(\rtheta-1)(\rtheta+1)(\rtheta+2)}{10}.
\end{equation}
Its Gibbs distribution $\pi(\rtheta) \propto \exp(- L(\rtheta))$ is \autoref{equ:bimodal_pdf}.
The SGD algorithm was
\begin{equation}
    \rtheta_{k+1} = \rtheta_k + \epsilon\widehat{\nabla}\log\pi(\rtheta_k),
\end{equation}
where gradient noise $\rdelta = \widehat{\nabla}\log\pi(\rtheta) - \nabla\log\pi(\rtheta)$ is
\begin{equation}
    \mathbb{E}[\rdelta] = m_1, \ \ \ 
    \mathbb{E}[\rdelta^2] = m_2.
\end{equation}

\subsection{Related work}
Many extensions have been proposed for SG-MCMC based on SGLD.
\cite{Ma:2015aa} comprehensively classified SG-MCMC algorithms, including SGLD, stochastic gradient Riemannian Langevin dynamics, and stochastic gradient Hamiltonian Monte Carlo.
These investigations are limited to the EM scheme and do not cover our formulation, though their counterparts in the skewed EM scheme can be beneficial.

There are several Bayesian interpretations of SGD other than continuous-time dynamics.
SGD and the extensions are regarded as approximate variational inference by \cite{Duvenaud:2016aa}, \cite{Chaudhari:2018aa}, \cite{Khan:2018gx}, and \cite{Zhang:2018yj}.
For an optimization aspect of SG-MCMC, \cite{Chen:2016aa} proposed a global optimization algorithm by applying simulated annealing to SGLD.

\section{Reinterpretation of SGLD}\label{sec:reinterpret_SGLD}
In this section, we reinterpret SGLD as a skewed EM scheme of an It\^o process and propose an extension to avoid overestimating the second moment of diffusion.

SGLD has been considered as an EM scheme with additional noise of stochastic gradient.
The algorithm can be written with two random sources of diffusion term $\rvzeta \sim \mathcal{N}(\vzero,\mI_D)$ and some gradient noise $\rvdelta$ as
\begin{equation}
    \rvtheta_{k+1} = \rvtheta_k + \epsilon \left(\nabla\log\pi(\rvtheta_k) + \rvdelta_k \right) + \sqrt{2\epsilon}\rvzeta_k.
\end{equation}
\citet{Sato:2014xy} assumed $\mathbb{E}[\rvdelta] = \vzero$.
\citet{Mandt:2017aa} assumed in addition that $\rvdelta\sim\mathcal{N}(\vzero,\mB\mB^\top)$.

\begin{table*}[t!]
\caption{Interpretation of SGLD}\label{tab:comp_SGLD}
\begin{center}
\begin{tabular}{lccc}
\toprule
    &\textbf{Skewed EM} &\textbf{\citet{Mandt:2017aa}} &\textbf{\citet{Sato:2014xy}}\\
\midrule
\makecell*{Noise assumption}   
& \makecell*{$\mathbb{E}[\rvdelta] = \vm_{\delta,1}$\\$\mathbb{E}[\rvdelta^{\circ 2}] = \vm_{\delta,2}$}   
& $\rvdelta\sim\mathcal{N}(\vzero,\mB\mB^\top)$   & $\E[\rvdelta]=\vzero$\\
\makecell*{Stationary distribution} & $\pi(\rvtheta)^\beta \exp{\left( \vm_1^\top \mR^{-2} \rvtheta \right)}^\frac{1}{D}$ 
& \makecell*{$\exp{(-\frac{N}{2}\rvtheta^\top\mA\rvtheta)^\beta}$}
& $\pi(\rvtheta)^\beta$\\
\makecell*{Thermodynamic beta}        
& \makecell*{$\beta\approx1$\\using adaptive diffusion}
& $\beta<1$                 & $\beta<1$\\
\bottomrule
\end{tabular}
\end{center}\end{table*}

\subsection{Skewed EM scheme for SGLD}
The two random sources can be unified into single noise in terms of the first and second moments.
We consider
\begin{equation}
    \rvtheta_{k+1} = \rvtheta_k + \epsilon \left(\nabla\log\pi(\rvtheta_k) + \rvdelta_k \right) + \sqrt{2\epsilon}\rvxi_k.
\end{equation}
with more general noise $\rvdelta$ and $\rvxi$ of
\begin{equation}
    \mathbb{E}[\rvdelta] = \vm_{\delta,1}, \ \ \ 
    \mathbb{E}[\rvdelta^{\circ 2}] = \vm_{\delta,2},
\end{equation}
\begin{equation}
    \mathbb{E}[\rvxi] = \vm_{\xi,1}, \ \ \ 
    \mathbb{E}[\rvxi^{\circ 2}] = \vm_{\xi,2},
\end{equation}
where $\evm_{\delta,1}^d, \evm_{\delta,2}^d, \evm_{\xi,1}^d, \evm_{\xi,2}^d$ are some constants for $d=1,\dots,D$.
The algorithm can be viewed as a skewed EM scheme
\begin{equation}
    \rvtheta_{k+1} = \rvtheta_k + \epsilon \left( \nabla\log\pi(\rvtheta_k) + \rvrho \right),
\end{equation}
where unified gradient noise $\rvrho\in\R^D$ is a random vector satisfying
\begin{equation}\begin{aligned}
    \mathbb{E}[\rvrho] &= \vm_{\rho,1} = \vm_{\delta,1} + \vm_{\xi,1},\\
    \mathbb{E}[\rvrho^{\circ 2}] &= \vm_{\rho,2} = \vm_{\delta,2} + \sqrt{\frac{8}{\epsilon}}\vm_{\delta,1}\vm_{\xi,1} + \frac{2}{\epsilon}\vm_{\xi,2}.    
\end{aligned}\end{equation}
From \autoref{thm:SGD_SLA}, SGLD is regarded as a skewed EM scheme of a continuous-time It\^o process
\begin{equation}
    d\rvtheta_t = \left(\nabla \log \pi(\rvtheta_t) + \vm_{\rho,1}\right) dt + \sqrt{2} \mR d\rvw_t,
\end{equation}
where $\mR = \sqrt{\frac{\epsilon}{2}} \diag{\left( \vm_{\rho,2}-\vm_{\rho,1}^{\circ2} \right)}^\frac{1}{2}$.
Its stationary distribution is
\begin{equation}
    p(\rvtheta) \propto \pi(\rvtheta)^{\frac{\Tr{(\mR^{-2})}}{D}} \exp{\left( \vm_1^\top \mR^{-2} \rvtheta \right)}^\frac{1}{D}.
\end{equation}
This reveals that the SGLD algorithm weakly converges even with biased and varied stochastic gradient.

\subsection{Overestimation of the second moment}
Let us consider the unbiased case of $\vm_{\delta,1} = \vzero$, $\vm_{\xi,1} = \vzero$ and $\vm_{\xi,2} = \vone$.
The second moment of unified noise is always larger than the appropriate scaling (\autoref{cor:appropriate_scaling})
\begin{equation}
    \vm_{\rho,2} = \vm_{\delta,2} + \frac{2}{\epsilon} \vone > \frac{2}{\epsilon}\vone.
\end{equation}
Thus thermodynamic beta of the stationary distribution is lower than one
\begin{equation}
    \beta = {\frac{\Tr{(\mR^{-2})}}{D}} = \frac{2}{D\epsilon}\left( \sum_{d=1}^D\frac{1}{\evm_{\delta,2}^d+\frac{2}{\epsilon}} \right) < 1.
\end{equation}
This means that the SGLD algorithm always drawing samples from a flatter distribution than the target.

Note that an overestimation of Gaussian posterior covariance has been reported by \citet{Mandt:2017aa} under the assumptions of Gaussian gradient noise and quadratic loss function.

\subsection{Avoiding overestimation using adaptive diffusion}
If moments $\vm_{\delta,2} \ (<\vone)$ is estimated, we can employ an adaptive diffusion $\rvxi^*$ of
\begin{equation}
    \mathbb{E}[\rvxi^*] = \vzero, \ \ \ 
    \mathbb{E}[(\rvxi^*)^{\circ 2}] = \vone - \frac{\epsilon}{2} \vm_{\delta,2},
\end{equation}
so that the unified noise has the appropriate moments
\begin{equation}
    \mathbb{E}[\rvrho] = \vzero, \ \ \ \mathbb{E}[\rvrho^{\circ 2}] = \frac{2}{\epsilon}\vone.
\end{equation}
The stationary distribution is the target with $\beta=1$.

Similarly to the existing adaptive moment estimation e.g., RMSprop, AdaDelta, and Adam, we can estimate moments $\E[\widehat{\nabla}\log\pi(\rvtheta)] \approx \vr$ and $\vm_{\delta,2} \approx \vv$ by moving average with some constants $\gamma_1$ and $\gamma_2$
\begin{equation}\begin{aligned}
    \vr_{k+1} &= \gamma_1 \vr_k + (1-\gamma_1) \widehat{\nabla}\log\pi(\theta_k)\\
    \vv_{k+1} &= \gamma_2 \vv_k + (1-\gamma_2) \left(\widehat{\nabla}\log\pi(\theta_k) - \vr_{k+1}\right)^{\circ 2}.
\end{aligned}\end{equation}
Thus the algorithm is
\begin{equation}
    \rvtheta_{k+1} = \rvtheta_k + \epsilon \widehat{\nabla}\log\pi(\rvtheta_k) + \sqrt{2\epsilon} \widehat{\rvxi}_k^*
\end{equation}
with adaptive diffusion
\begin{equation}
    \mathbb{E}[\widehat{\rvxi}^*] = \vzero, \ \ \ 
    \mathbb{E}[(\widehat{\rvxi}^*)^{\circ 2}] = \vone - \frac{\epsilon}{2}\vv.
\end{equation}
Such a component random variable can be efficiently generated by a two-point distributed random variable
\begin{equation}
    P\left((\widehat{\ervxi}^*)^d = \pm\sqrt{1 - \frac{\epsilon}{2}v^d}\right) = 0.5.
\end{equation}

This extension is straightforwardly derived because the skewed EM scheme does not have the third or higher moment condition.
It is basically difficult to design the diffusion to standardize the third or higher moments.

\section{Conclusion}
We introduced the skewed EM scheme to construct a common foundation of SGD and SGLD.
It relaxes the existing assumptions on gradient noise and loss functions, extending possible stationary distributions to the Gibbs distribution of a general loss function.
We validated it by the weak approximation theory in stochastic analysis and numerical experiments.
We investigate the convergence of SGD with biased gradient and reinterpreted SGLD to propose an extension to avoid overestimating the second moment of diffusion.

\clearpage
\appendix

\section{Proof of \autoref{thm:main}}\label{app:proof_thm:main}
Let one-step difference of the continuous-time dynamics be $\Delta$, the EM scheme be $\widetilde{\Delta}$, and the skewed EM scheme be $\overline{\Delta}$ from given value $\vx_{t_{k}} = \vy_{k} \in \R^D$, so that
\begin{equation}\begin{aligned}\label{equ:deltas}
    \Delta &= \rvx_{t_{k+1}} - \vx_{t_k}\\
    \widetilde{\Delta} &= \epsilon \va(\vy_k) + \sqrt{\epsilon} \mB(\vy_k)\rvzeta_k\\
    \overline{\Delta} &= \epsilon \va(\vy_k) + \sqrt{\epsilon} \mB(\vy_k)\rvxi_k,
\end{aligned}\end{equation}
where 
\begin{equation}
    \mathbb{E}[(\ervzeta^m)^p] = \begin{cases}
        0       & (p=1,3,\dots)\\
        (p-1)!! & (p=2,4,\dots)
    \end{cases}
\end{equation}
and 
\begin{equation}
    \mathbb{E}[(\ervxi^m)^p] = \begin{cases}
        0   & (p = 1)\\
        1   & (p = 2).
    \end{cases}
\end{equation}
The $d$-th components are denoted by $\Delta^d, \widetilde{\Delta}^d, \overline{\Delta}^d$.

Let us review the following known results of the EM scheme before our proof.
\begin{lemma}[\citep{Milstein:1979aa}]\label{lem:onestep_lemma1_EM_a}
    There exists some constants $C>0$, $\kappa>0$ such that
    \begin{equation}
        \left| \E\left[ \Delta^d - \widetilde{\Delta}^d \right] \right| \leq C(1+\|\rvy_k\|_2^\kappa) \epsilon^2
    \end{equation}
    for $d=1,\ldots,D$ and $0<\epsilon<1$.
\end{lemma}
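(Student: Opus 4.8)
The plan is to derive this local bound from a first-order It\^o--Taylor expansion of the exact increment and to compare it, term by term, with the Euler--Maruyama increment, everything being conditional on the common starting value $\rvx_{t_k}=\rvy_k$, which is treated as deterministic.

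First I would write \autoref{equ:general_SDE} in integral form over $[t_k,t_{k+1}]$, so that the $d$-th component splits as $\Delta^d = \int_{t_k}^{t_{k+1}} a^d(\rvx_s)\,ds + \sum_{m=1}^{M}\int_{t_k}^{t_{k+1}} B^{d,m}(\rvx_s)\,dw^m_s$. Under \autoref{asm:common} the coefficients have at most linear growth, so the It\^o integrals are true martingales with vanishing mean, giving $\E[\Delta^d] = \E\!\big[\int_{t_k}^{t_{k+1}} a^d(\rvx_s)\,ds\big]$. Next I would apply It\^o's formula to $s\mapsto a^d(\rvx_s)$: with $\Ls$ the generator of \autoref{equ:general_SDE}, $\Ls g = \sum_i a^i\,\partial_{x_i} g + \tfrac12\sum_{i,j}(\mB\mB^\top)_{ij}\,\partial_{x_i}\partial_{x_j} g$, this reads $a^d(\rvx_s) = a^d(\rvy_k) + \int_{t_k}^{s}(\Ls a^d)(\rvx_u)\,du + (\text{martingale})$. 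Taking expectations removes the martingale and Fubini yields
\begin{equation*}
    \E[\Delta^d] = \epsilon\, a^d(\rvy_k) + R^d,\qquad R^d := \int_{t_k}^{t_{k+1}}\!\!\int_{t_k}^{s}\E\big[(\Ls a^d)(\rvx_u)\big]\,du\,ds .
\end{equation*}
Since $\rvzeta_k$ is centred, $\E[\widetilde{\Delta}^d] = \epsilon\, a^d(\rvy_k)$ exactly, hence $\E[\Delta^d-\widetilde{\Delta}^d] = R^d$ and it only remains to estimate $|R^d|$.

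For that, I would observe that $\Ls a^d$ is a finite sum of products of $\va$, $\mB$ and their first- and second-order partial derivatives, each of which obeys a growth bound $C(1+\|\vx\|_2^{\kappa_0})$ by \autoref{asm:milstein}; a product of finitely many such functions still satisfies $|(\Ls a^d)(\vx)| \le C(1+\|\vx\|_2^{\kappa})$ for a suitable $\kappa$. Together with the standard a priori moment estimate $\sup_{u\in[t_0,t_0+T]}\E[\|\rvx_u\|_2^{\kappa}] \le C(1+\|\rvy_k\|_2^{\kappa})$, which follows from the linear-growth consequence of \autoref{asm:common} by Gr\"onwall's inequality with $\rvx_{t_k}=\rvy_k$, this gives $\E[|(\Ls a^d)(\rvx_u)|] \le C(1+\|\rvy_k\|_2^{\kappa})$ uniformly in $u\in[t_k,t_{k+1}]$, so the double time integral over $t_k\le u\le s\le t_{k+1}$ contributes a factor $\tfrac12\epsilon^2$, which is exactly the asserted bound.

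The hard part is the uniform a priori moment estimate for $\rvx_u$: it has to hold with a constant that does not depend on $k$ or on $\epsilon$, which is why it should be stated for the whole interval $[t_0,t_0+T]$ and why it should invoke only the linear-growth property implied by the Lipschitz hypothesis in \autoref{asm:common}, rather than anything step-specific. Once that estimate is in hand, the It\^o--Taylor expansion and the tracking of growth exponents are entirely routine; note also that the identity $\E[\widetilde{\Delta}^d]=\epsilon\,a^d(\rvy_k)$ used here relies only on the diffusion noise being centred, so the analogue of this lemma for the simplified and skewed EM increments of \autoref{equ:alg_simplifiedEM} and \autoref{equ:skewedEM} is immediate.
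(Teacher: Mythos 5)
Your argument is correct and is exactly the classical It\^o--Taylor one-step estimate from the cited source: the paper itself states this lemma as a known result of \citet{Milstein:1979aa} and gives no proof, and your reconstruction (integral form of the increment, martingale cancellation, It\^o's formula on $a^d$, polynomial growth of $\Ls a^d$ via Assumption~\ref{asm:milstein}, and the Gr\"onwall-based a priori moment bound to get the $\epsilon^2$ factor with the $(1+\|\rvy_k\|_2^{\kappa})$ prefactor) is the standard route. Your closing remark that the identity $\E[\widetilde{\Delta}^d]=\epsilon\,a^d(\rvy_k)$ only uses centredness of the noise is also consistent with how the paper's Lemma~\ref{lem:onestep_lemma1_a} reduces the skewed-EM case to this one.
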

\begin{lemma}[\citep{Milstein:1979aa}]\label{lem:onestep_lemma1_EM_b}
    There exists some constants $C>0$, $\kappa>0$ such that
    \begin{equation}
        \left| \E\left[\Delta^{d_1}\Delta^{d_2} - \widetilde{\Delta}^{d_1}\widetilde{\Delta}^{d_2} \right] \right| \leq C(1+\|\rvy_k\|_2^\kappa) \epsilon^2
    \end{equation}
    for $d_1,d_2=1,\ldots,D$ and $0<\epsilon<1$.
\end{lemma}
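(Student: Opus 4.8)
The plan is to expand both one-step increments as It\^o--Taylor (stochastic Taylor) series about the common initial point $\vy_k$ and to check that they agree through a remainder whose expectation is of order $\epsilon^2$. First I would write the exact increment in integral form and apply It\^o's formula once to $\va$ and to each column of $\mB$ (introducing the It\^o operators $L^0,L^1,\dots,L^M$), which yields
\begin{equation}\label{equ:proposal_ITexp}
    \Delta^d = \epsilon\, a^d(\vy_k) + \sum_{m=1}^{M} B^{d,m}(\vy_k)\,\Delta w^m + \rho^d,
\end{equation}
where $\Delta w^m = w^m_{t_{k+1}} - w^m_{t_k}$ and the remainder $\rho^d$ is a finite sum of iterated integrals of one higher order: a Lebesgue double integral of $L^0 a^d$, the mixed integrals $\int\!\int L^{m'} a^d\,dw^{m'}\,ds$ and $\int\!\int L^0 B^{d,m}\,du\,dw^m$, the double stochastic integrals $\int\!\int L^{m'} B^{d,m}\,dw^{m'}\,dw^m$, and a fully-remaindered tail. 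Using Assumption~\ref{asm:milstein} together with the standard a priori bound $\E\big[\sup_{s\in[t_k,t_{k+1}]}\|\rvx_s\|_2^{2p}\big]\le C(1+\|\vy_k\|_2^{2p})$, which follows from Assumption~\ref{asm:common}, I would record three estimates for $\rho^d$, all with constants of $\kappa$-polynomial growth in $\|\vy_k\|_2$: \emph{(i)} $|\E[\rho^d]|\le C(1+\|\vy_k\|_2^\kappa)\epsilon^2$, which is exactly Lemma~\ref{lem:onestep_lemma1_EM_a} since $\E[\widetilde\Delta^d]=\epsilon\, a^d(\vy_k)$; \emph{(ii)} $\E[(\rho^d)^2]\le C(1+\|\vy_k\|_2^\kappa)\epsilon^2$, from It\^o isometry, the double stochastic integrals being the dominant $L^2$-term, of size $O(\epsilon)$; and \emph{(iii)} $|\E[\Delta w^{m}\,\rho^d]|\le C(1+\|\vy_k\|_2^\kappa)\epsilon^2$, by pairing $\Delta w^m=\int_{t_k}^{t_{k+1}} dw^m_s$ against each stochastic-integral piece of $\rho^d$ through It\^o isometry; this annihilates the double-stochastic-integral contribution outright, because its inner It\^o integral has mean zero, and leaves only an $O(\epsilon^2)$ ordinary time integral from the $L^0 B$ piece.

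With \eqref{equ:proposal_ITexp} I would then multiply the two expansions and take expectations term by term. The $a\,a$ term is $\epsilon^2\,a^{d_1}(\vy_k)a^{d_2}(\vy_k)$; the term pairing $a$ with $\Delta w$ vanishes because $\E[\Delta w^m]=0$; the terms pairing $a$ with the remainder and the term pairing $\Delta w$ with the remainder are $O(\epsilon^2)$ by \emph{(i)} and \emph{(iii)}; the remainder-times-remainder term is $O(\epsilon^2)$ by Cauchy--Schwarz and \emph{(ii)}; and the $\Delta w\,\Delta w$ term equals $\epsilon\sum_m B^{d_1,m}(\vy_k)B^{d_2,m}(\vy_k)$ since $\E[\Delta w^{m_1}\Delta w^{m_2}]=\delta_{m_1 m_2}\epsilon$. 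Hence
\begin{equation}
    \E[\Delta^{d_1}\Delta^{d_2}] = \epsilon^2\,a^{d_1}(\vy_k)a^{d_2}(\vy_k) + \epsilon\sum_{m=1}^{M} B^{d_1,m}(\vy_k)B^{d_2,m}(\vy_k) + O\big((1+\|\vy_k\|_2^\kappa)\epsilon^2\big).
\end{equation}
For the EM increment the same computation is exact: $\rvzeta_k\sim\mathcal{N}(\vzero,\mI_M)$ gives $\E[\zeta^m]=0$ and $\E[\zeta^{m_1}\zeta^{m_2}]=\delta_{m_1 m_2}$, so $\E[\widetilde\Delta^{d_1}\widetilde\Delta^{d_2}] = \epsilon^2\,a^{d_1}(\vy_k)a^{d_2}(\vy_k) + \epsilon\sum_m B^{d_1,m}(\vy_k)B^{d_2,m}(\vy_k)$ with no remainder. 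Subtracting the two displays and collecting the growth constants gives the stated bound.

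I expect the main obstacle to be bookkeeping rather than anything conceptual: organizing $\rho^d$ so that each iterated integral is assigned its correct order in $\epsilon$ and its correct $\kappa$-growth constant, and in particular justifying \emph{(iii)} carefully — the cancellation there rests on the orthogonality of It\^o integrals to the Wiener increment (an It\^o integral against $dw^m$ has mean zero even after multiplication by $\Delta w^m$, via the isometry, not by naive independence). Everything else is the moment machinery already used for Lemma~\ref{lem:onestep_lemma1_EM_a}, so in a write-up I would isolate \emph{(i)}--\emph{(iii)} as a single remainder lemma, invoke the a priori moment bounds supplied by Assumptions~\ref{asm:common} and~\ref{asm:milstein}, and then present the short expand-and-subtract computation above; this is the standard one-step comparison underlying the weak-order-$1$ theory of the EM scheme.
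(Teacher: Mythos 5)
Your proof is correct, but note that the paper does not actually prove this lemma: it is imported verbatim from \citet{Milstein:1979aa} as a known ingredient of the weak--order--$1$ theory of the EM scheme, and the paper's own work (Lemmas~\ref{lem:onestep_lemma1_a} and \ref{lem:onestep_lemma1_b}) only reduces the skewed-EM comparison to this cited result by a triangle inequality plus a direct moment computation for $\widetilde{\Delta}$ and $\overline{\Delta}$. What you have written is a faithful reconstruction of the standard Milstein argument: a one-fold It\^o--Taylor expansion $\Delta^d=\epsilon\,a^d(\vy_k)+\sum_m B^{d,m}(\vy_k)\Delta w^m+\rho^d$, the three remainder estimates $|\E[\rho^d]|=O(\epsilon^2)$, $\E[(\rho^d)^2]=O(\epsilon^2)$, $|\E[\Delta w^m\rho^d]|=O(\epsilon^2)$ (the last via the polarization form of the It\^o isometry, which correctly kills the double stochastic integral and leaves only the $L^0B$ time integral), and then expand-and-subtract against the exact Gaussian moments of $\widetilde{\Delta}$. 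The orders all check out: the cross term $\epsilon a^{d_1}\E[\rho^{d_2}]$ is in fact $O(\epsilon^3)$, the $\rho\rho$ term is $O(\epsilon^2)$ by Cauchy--Schwarz and your estimate \emph{(ii)}, and the leading terms $\epsilon^2 a^{d_1}a^{d_2}+\epsilon\sum_m B^{d_1,m}B^{d_2,m}$ cancel exactly against $\E[\widetilde{\Delta}^{d_1}\widetilde{\Delta}^{d_2}]$, matching the computation the paper itself performs for the EM increment inside the proof of Lemma~\ref{lem:onestep_lemma1_b}. The only point deserving extra care in a full write-up is the one you already flag: the $\kappa$-polynomial growth constants for $L^0a$, $L^{m'}B$, etc.\ must be pulled back to $\|\vy_k\|_2$ via the a priori moment bound on $\sup_{s\in[t_k,t_{k+1}]}\|\rvx_s\|_2$, which requires both Assumption~\ref{asm:common} and Assumption~\ref{asm:milstein}; with that in place the argument is complete.
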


We often employ the linear growth condition of coefficients $\va$ and $\mB$.
\begin{lemma}[Linear growth condition]
    For all $\vx\in\R^D$ the following inequality holds for some constant $C>0$
    \begin{equation}
        \|\va(\vx)\|_2 + \sum_{m=1}^M \|\emB^{*,m}(\vx)\|_2 \leq C (1+\|\vx\|_2).
    \end{equation}
\end{lemma}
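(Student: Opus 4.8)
The plan is to derive the linear growth bound directly from the Lipschitz condition in \autoref{asm:common} by comparing $\va(\vx)$ and $\mB(\vx)$ with their values at a fixed reference point, which I take to be the origin $\vzero$. First I would invoke \autoref{equ:Lipschitz} with $\vy=\vzero$, which gives
\begin{equation}
    \|\va(\vx)-\va(\vzero)\|_2 + \sum_{m=1}^M \|\emB^{*,m}(\vx)-\emB^{*,m}(\vzero)\|_2 \leq C_0\|\vx\|_2
\end{equation}
for the Lipschitz constant $C_0>0$.

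Next, applying the triangle inequality termwise, $\|\va(\vx)\|_2 \leq \|\va(\vzero)\|_2 + \|\va(\vx)-\va(\vzero)\|_2$ and $\|\emB^{*,m}(\vx)\|_2 \leq \|\emB^{*,m}(\vzero)\|_2 + \|\emB^{*,m}(\vx)-\emB^{*,m}(\vzero)\|_2$ for each $m$, then summing over $m$ and combining with the previous display yields
\begin{equation}
    \|\va(\vx)\|_2 + \sum_{m=1}^M \|\emB^{*,m}(\vx)\|_2 \leq C_1 + C_0\|\vx\|_2,
\end{equation}
where $C_1 := \|\va(\vzero)\|_2 + \sum_{m=1}^M \|\emB^{*,m}(\vzero)\|_2$. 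Since $\va$ and $\mB$ are smooth by \autoref{asm:common}, they are finite at the origin, so $C_1<\infty$. Taking $C := \max\{C_0,C_1\}$ gives $\|\va(\vx)\|_2 + \sum_{m=1}^M \|\emB^{*,m}(\vx)\|_2 \leq C(1+\|\vx\|_2)$, which is the claim.

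There is no substantial obstacle: this is the classical fact that a function Lipschitz in $\vx$ has at most linear growth, and the only point worth a word of justification is the finiteness of $C_1$ at the base point, which follows from smoothness. The argument is unchanged if any fixed point of $\R^D$ replaces $\vzero$.
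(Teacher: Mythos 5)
Your proposal is correct and follows exactly the paper's route: the paper's one-line proof is precisely "take $\vy=\vzero$ in the Lipschitz condition," and you have merely filled in the routine triangle-inequality step and the finiteness of the coefficients at the origin. Nothing further is needed.
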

\begin{proof}
    It is directly derived by taking $\vy=\vzero$ in the Lipschitz condition (\autoref{equ:Lipschitz}).
\end{proof}

We then check the following properties of the skewed EM scheme for one-step approximation.
Our approach is to show that the skewed EM scheme closely approximates the EM scheme with respect to its first and second moments.
\begin{lemma}\label{lem:onestep_lemma1_a}
    There exists some constants $C>0$, $\kappa>0$ such that
    \begin{equation}
        \left| \E\left[ \Delta^d - \overline{\Delta}^d \right] \right| \leq C(1+\|\rvy_k\|_2^\kappa) \epsilon^2
    \end{equation}
    for $d=1,\ldots,D$ and $0<\epsilon<1$.
\end{lemma}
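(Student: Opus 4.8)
The plan is to reduce the claim to the corresponding one-step estimate for the EM scheme, \autoref{lem:onestep_lemma1_EM_a}, by observing that the skewed EM increment $\overline{\Delta}$ and the EM increment $\widetilde{\Delta}$ have identical conditional first moments when both are started from the common value $\vx_{t_k}=\vy_k$. First I would split
\begin{equation}
    \E[\Delta^d - \overline{\Delta}^d] = \E[\Delta^d - \widetilde{\Delta}^d] + \E[\widetilde{\Delta}^d - \overline{\Delta}^d],
\end{equation}
and then show that the last term vanishes identically, so the desired bound is nothing but \autoref{lem:onestep_lemma1_EM_a} itself.

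For the vanishing term, \autoref{equ:deltas} gives $\widetilde{\Delta} - \overline{\Delta} = \sqrt{\epsilon}\,\mB(\vy_k)(\rvzeta_k - \rvxi_k)$, that is, componentwise $\widetilde{\Delta}^d - \overline{\Delta}^d = \sqrt{\epsilon}\sum_{m=1}^M \emB^{d,m}(\vy_k)(\ervzeta_k^m - \ervxi_k^m)$. Since $\vy_k$ is fixed and the driving noises $\rvzeta_k,\rvxi_k$ are independent of it, taking expectations yields $\E[\widetilde{\Delta}^d - \overline{\Delta}^d] = \sqrt{\epsilon}\sum_{m=1}^M \emB^{d,m}(\vy_k)\big(\E[\ervzeta_k^m] - \E[\ervxi_k^m]\big)$. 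The Gaussian noise has $\E[\ervzeta^m]=0$, and the skewed EM moment condition \autoref{equ:moment_skewEM} gives $\E[\ervxi^m]=0$; hence every summand is zero and $\E[\widetilde{\Delta}^d - \overline{\Delta}^d]=0$.

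Combining the two steps, $\E[\Delta^d - \overline{\Delta}^d] = \E[\Delta^d - \widetilde{\Delta}^d]$, and \autoref{lem:onestep_lemma1_EM_a} then supplies constants $C>0$, $\kappa>0$ with $|\E[\Delta^d - \overline{\Delta}^d]| \le C(1+\|\rvy_k\|_2^\kappa)\epsilon^2$ for all $d=1,\dots,D$ and $0<\epsilon<1$, which is the assertion. There is essentially no obstacle here: the whole point is that only the first moment of the diffusion noise enters the one-step conditional mean, and the skewed EM noise is designed to match it exactly — higher moments, which the EM and simplified EM schemes also pin down, are irrelevant to this particular estimate. The same reduction, now invoking the second-moment condition $\E[(\ervxi^m)^2]=1$ of \autoref{equ:moment_skewEM} (together with the vanishing of cross-correlations) and falling back on \autoref{lem:onestep_lemma1_EM_b}, will settle the companion estimate for $\E[\Delta^{d_1}\Delta^{d_2} - \overline{\Delta}^{d_1}\overline{\Delta}^{d_2}]$.
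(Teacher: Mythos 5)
Your proposal is correct and matches the paper's own proof essentially verbatim: both decompose $\E[\Delta^d-\overline{\Delta}^d]$ through the EM increment $\widetilde{\Delta}^d$, invoke \autoref{lem:onestep_lemma1_EM_a} for the first piece, and show the remainder $\E[\widetilde{\Delta}^d-\overline{\Delta}^d]=\sqrt{\epsilon}\,\E[\emB^{d,*}(\vy_k)(\rvzeta_k-\rvxi_k)]$ vanishes because both noises have zero mean. No further comment is needed.
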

\begin{proof}
    From the triangle inequality of $\E\left[ \Delta^d - \widetilde{\Delta}^d \right]$ and $\E\left[ \widetilde{\Delta}^d - \overline{\Delta}^d \right]$, we have
    \begin{equation}
        \left| \E\left[ \Delta^d - \overline{\Delta}^d \right] \right| \leq \left|\E\left[ \Delta^d - \widetilde{\Delta}^d \right]\right| + \left|\E\left[ \widetilde{\Delta}^d - \overline{\Delta}^d \right]\right|.
    \end{equation}
    From \autoref{lem:onestep_lemma1_EM_a}, our goal is to show
    \begin{equation}\label{equ:lem_m1_goal}
        \left|\E\left[ \widetilde{\Delta}^d - \overline{\Delta}^d \right]\right| \leq C(1+\|\rvy_k\|_2^\kappa) \epsilon^2.
    \end{equation}
    From \autoref{equ:deltas},
    \begin{equation}
        \left|\E\left[ \widetilde{\Delta}^d - \overline{\Delta}^d \right]\right|
        = \left|\E\left[ \mB^{d,*}(\vy_k) (\rvzeta_k - \rvxi_k) \right]\right| \sqrt{\epsilon}
    \end{equation}
    Because $\E[\ervzeta^m] = \E[\ervxi^m] = 0$,
    \begin{equation}
        \left|\E\left[ \widetilde{\Delta}^d - \overline{\Delta}^d \right]\right| = 0.
    \end{equation}
    This satisfies \autoref{equ:lem_m1_goal}.
\end{proof}

\begin{lemma}\label{lem:onestep_lemma1_b}
    There exists some constants $C>0$, $\kappa>0$ such that
    \begin{equation}
        \left| \E\left[\Delta^{d_1}\Delta^{d_2} - \overline{\Delta}^{d_1}\overline{\Delta}^{d_2} \right] \right| \leq C(1+\|\rvy_k\|_2^\kappa) \epsilon^2
    \end{equation}
    for $d_1,d_2=1,\ldots,D$ and $0<\epsilon<1$.
\end{lemma}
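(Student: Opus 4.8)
The plan is to mirror the proof of \autoref{lem:onestep_lemma1_a}: insert the EM scheme as an intermediate comparison and apply the triangle inequality,
\begin{equation}
    \left|\E\left[\Delta^{d_1}\Delta^{d_2} - \overline{\Delta}^{d_1}\overline{\Delta}^{d_2}\right]\right|
    \leq \left|\E\left[\Delta^{d_1}\Delta^{d_2} - \widetilde{\Delta}^{d_1}\widetilde{\Delta}^{d_2}\right]\right|
    + \left|\E\left[\widetilde{\Delta}^{d_1}\widetilde{\Delta}^{d_2} - \overline{\Delta}^{d_1}\overline{\Delta}^{d_2}\right]\right|.
\end{equation}
The first term is already controlled by \autoref{lem:onestep_lemma1_EM_b}, so the whole task reduces to bounding the second term, and I expect it to vanish identically, exactly as the analogous quantity did in \autoref{lem:onestep_lemma1_a}.

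To verify this, I would expand both products using $\widetilde{\Delta}^d = \epsilon \eva^d(\rvy_k) + \sqrt{\epsilon}\sum_{m} \emB^{d,m}(\rvy_k)\ervzeta^m$ and $\overline{\Delta}^d = \epsilon \eva^d(\rvy_k) + \sqrt{\epsilon}\sum_{m} \emB^{d,m}(\rvy_k)\ervxi^m$, where $\ervzeta^m,\ervxi^m$ denote the components of $\rvzeta_k,\rvxi_k$. The deterministic part $\epsilon^2 \eva^{d_1}(\rvy_k)\eva^{d_2}(\rvy_k)$ is common to both and cancels; the two cross terms linear in the noise vanish in expectation because $\E[\ervzeta^m] = \E[\ervxi^m] = 0$; and the remaining purely diffusive term equals $\epsilon\sum_{m,m'}\emB^{d_1,m}(\rvy_k)\emB^{d_2,m'}(\rvy_k)\,\E[\ervzeta^m\ervzeta^{m'}]$ for the EM scheme and the same expression with $\E[\ervxi^m\ervxi^{m'}]$ for the skewed EM scheme. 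Since $\rvzeta\sim\mathcal{N}(\vzero,\mI_M)$ and the components of $\rvxi$ are independent, each with unit second moment, both expectations equal $\delta_{mm'}$, so the two diffusive terms coincide and the difference of expectations is exactly $0$. Combining this with \autoref{lem:onestep_lemma1_EM_b} then gives the claimed bound, with the constants $C,\kappa$ of that lemma.

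The only point requiring a little care — not really an obstacle — is the off-diagonal behaviour of $\rvxi$: \autoref{equ:moment_skewEM} fixes only the marginal moments of each $\ervxi^m$, so one must invoke that the components are mutually uncorrelated (as in the intended construction, e.g.\ i.i.d.\ two-point variables) in order for $\E[\ervxi^m\ervxi^{m'}] = \delta_{mm'}$ to match the covariance of the Gaussian $\rvzeta$; otherwise an $O(\epsilon)$ off-diagonal discrepancy in the second moment would survive and the bound would degrade. As in \autoref{lem:onestep_lemma1_a}, no third or higher moment of $\rvxi$ enters the computation, which is precisely the feature that lets the skewed EM scheme discard every moment condition beyond the first two.
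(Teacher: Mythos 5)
Your proof is correct and follows essentially the same route as the paper's: triangle inequality through the EM increment, Lemma~\ref{lem:onestep_lemma1_EM_b} for the first term, and a moment-by-moment expansion showing the second term vanishes identically. Your explicit remark that one must additionally assume the components of $\rvxi$ are mutually uncorrelated (so that $\E[\ervxi^m\ervxi^{n}] = 0$ for $m\neq n$) is well taken --- the paper silently invokes $\E[\ervxi^m\ervxi^n]=\E[\ervxi^m]\E[\ervxi^n]$ at the same step, which does not follow from the stated componentwise moment condition~\ref{equ:moment_skewEM} alone.
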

\begin{proof}
    From the triangle inequality of $\E\left[ \Delta^{d_1}\Delta^{d_2} - \widetilde{\Delta}^{d_1}\widetilde{\Delta}^{d_2} \right]$ and $\E\left[ \widetilde{\Delta}^{d_1}\widetilde{\Delta}^{d_2} - \overline{\Delta}^{d_1}\overline{\Delta}^{d_2} \right]$, we have
    \begin{equation}\begin{aligned}
        \left|\E\left[ \Delta^{d_1}\Delta^{d_2} - \overline{\Delta}^{d_1}\overline{\Delta}^{d_2} \right]\right| &\leq \left|\E\left[ \Delta^{d_1}\Delta^{d_2} - \widetilde{\Delta}^{d_1}\widetilde{\Delta}^{d_2} \right]\right| + \left|\E\left[ \widetilde{\Delta}^{d_1}\widetilde{\Delta}^{d_2} - \overline{\Delta}^{d_1}\overline{\Delta}^{d_2} \right]\right|.
    \end{aligned}\end{equation}
    From \autoref{lem:onestep_lemma1_EM_b}, our goal is to show
    \begin{equation}\label{equ:lem_m2_goal}
        \left|\E\left[ \widetilde{\Delta}^{d_1}\widetilde{\Delta}^{d_2} - \overline{\Delta}^{d_1}\overline{\Delta}^{d_2} \right]\right| \leq C(1+\|\rvy_k\|_2^\kappa) \epsilon^2.
    \end{equation}
    From \autoref{equ:deltas},
    \begin{equation}\begin{aligned}
        \widetilde{\Delta}^{d_1}\widetilde{\Delta}^{d_2} &= \epsilon^2 a^{d_1}a^{d_2} + \epsilon^{1.5} a^{d_1}\emB^{d_2,*}\rvzeta_k + \epsilon^{1.5}a^{d_2}\emB^{d_1,*}\rvzeta_k + \epsilon \emB^{d_1,*}\rvzeta_k\emB^{d_2,*}\rvzeta_k,
    \end{aligned}\end{equation}
    where functions $\va, \mB$ are evaluated at $\rvy_k$.
    Thus
    \begin{equation}\begin{aligned}
        \E\left[ \widetilde{\Delta}^{d_1}\widetilde{\Delta}^{d_2} \right] &= \epsilon^2 a^{d_1}a^{d_2} + \epsilon^{1.5} a^{d_1}\emB^{d_2,*}\E[\rvzeta_k] + \epsilon^{1.5}a^{d_2}\emB^{d_1,*}\E[\rvzeta_k] + \epsilon \sum_{m,n=1}^M(\emB^{d_1,m}\emB^{d_2,n}\E[\ervzeta_k^m\ervzeta_k^n])\\
    \end{aligned}\end{equation}
    Because $\E[\rvzeta] = \vzero$ and 
    \begin{equation}
        \E[\ervzeta^m\ervzeta^n] = \begin{cases}
            \E[\ervzeta^m]\E[\ervzeta^n] &= 0 \ \ \ (m \neq n)\\
            \E[(\ervzeta^m)^2] &= 1 \ \ \ (m = n),
        \end{cases}
    \end{equation}
    we have
    \begin{equation}
        \E\left[ \widetilde{\Delta}^{d_1}\widetilde{\Delta}^{d_2} \right] = \epsilon^2 a^{d_1}a^{d_2} + \epsilon \sum_{m=1}^M \emB^{d_1,m}\emB^{d_2,m}.
    \end{equation}
    In the same way, we obtain
    \begin{equation}\begin{aligned}
        \E\left[ \overline{\Delta}^{d_1}\overline{\Delta}^{d_2} \right] &= \epsilon^2 a^{d_1}a^{d_2} + \epsilon^{1.5} a^{d_1}\emB^{d_2,*}\E[\rvxi_k] + \epsilon^{1.5}a^{d_2}\emB^{d_1,*}\E[\rvxi_k] + \epsilon \sum_{m,n=1}^M(\emB^{d_1,m}\emB^{d_2,n}\E[\ervxi_k^m\ervxi_k^n])\\
    \end{aligned}\end{equation}
    Because $\E[\rvxi] = \vzero$ and 
    \begin{equation}
        \E[\ervxi^m\ervxi^n] = \begin{cases}
            \E[\ervxi^m]\E[\ervxi^n] &= 0 \ \ \ (m \neq n)\\
            \E[(\ervxi^m)^2] &= 1 \ \ \ (m = n),
        \end{cases}
    \end{equation}
    we have
    \begin{equation}
        \E\left[ \overline{\Delta}^{d_1}\overline{\Delta}^{d_2} \right] = \epsilon^2 a^{d_1}a^{d_2} + \epsilon \sum_{m=1}^M \emB^{d_1,m}\emB^{d_2,m}.
    \end{equation}
    Therefore
    \begin{equation}
        \left|\E\left[ \widetilde{\Delta}^{d_1}\widetilde{\Delta}^{d_2} - \overline{\Delta}^{d_1}\overline{\Delta}^{d_2} \right]\right| = 0.
    \end{equation}
    This satisfies \autoref{equ:lem_m2_goal}.
\end{proof}

\begin{lemma}\label{lem:onestep_lemma2}
    There exists some constants $C>0$, $\kappa>0$ such that
    \begin{equation}\label{equ:onestep_asm2}
        \E\left[ \left|\overline{\Delta}^{d_1}\right| \left|\overline{\Delta}^{d_2}\right| \left|\overline{\Delta}^{d_3}\right| \right] \leq C(1+\|\rvy_k\|_2^\kappa) \epsilon^{1.5}
    \end{equation}
    for $d_1,d_2,d_3=1,\ldots,D$ and $0<\epsilon<1$.
\end{lemma}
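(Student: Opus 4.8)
The plan is to prove this by pure order-of-magnitude counting: the one-step increment $\overline{\Delta} = \epsilon\va(\vy_k) + \sqrt\epsilon\,\mB(\vy_k)\rvxi_k$ is, apart from the $O(\epsilon)$ drift term, of size $\sqrt\epsilon$, so a product of three copies is of size $\epsilon^{3/2}$, and the expectation merely has to absorb a polynomial-in-$\|\vy_k\|_2$ factor supplied by the linear growth of $\va$ and $\mB$. As usual for a one-step estimate, I would fix $\vy = \vy_k$ and take the expectation over the fresh, independent noise $\rvxi_k$.

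First I would write, componentwise, $\overline{\Delta}^d = \epsilon\,a^d(\vy) + \sqrt\epsilon\sum_{m=1}^M B^{d,m}(\vy)\,\ervxi_k^m$, and, using $0<\epsilon<1$ so that $\epsilon\le\sqrt\epsilon$ together with the triangle inequality, bound
\begin{equation}
    |\overline{\Delta}^d| \;\le\; \sqrt\epsilon\,Y^d, \qquad Y^d := |a^d(\vy)| + \sum_{m=1}^M |B^{d,m}(\vy)|\,|\ervxi_k^m| \;\ge\; 0 .
\end{equation}
Multiplying this over $d = d_1,d_2,d_3$ pulls out $\epsilon^{3/2}$, so the assertion reduces to $\E[Y^{d_1}Y^{d_2}Y^{d_3}] \le C(1+\|\vy\|_2^\kappa)$. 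By the generalized H\"older inequality, $\E[Y^{d_1}Y^{d_2}Y^{d_3}] \le \prod_{i=1}^3\bigl(\E[(Y^{d_i})^3]\bigr)^{1/3}$; expanding the cube, each term of $\E[(Y^{d})^3]$ is a product of at most three of the coefficients $|a^{d}(\vy)|,|B^{d,m}(\vy)|$ times a mixed moment of the components of $\rvxi_k$ of order $\le 3$. Bounding those moments by a constant and invoking the linear growth condition, which gives $|a^d(\vy)| \le C(1+\|\vy\|_2)$ and $\sum_{m}|B^{d,m}(\vy)| \le C(1+\|\vy\|_2)$, yields $\E[(Y^{d})^3] \le C(1+\|\vy\|_2)^3$, hence $\E[Y^{d_1}Y^{d_2}Y^{d_3}] \le C(1+\|\vy\|_2)^3 \le C(1+\|\vy\|_2^\kappa)$ with $\kappa = 3$. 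Combining with the factor $\epsilon^{3/2}$ gives \autoref{equ:onestep_asm2}.

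The only step needing care is the appeal to the finiteness of the third absolute moments $\E[|\ervxi_k^m|^3]$: the moment condition \autoref{equ:moment_skewEM} fixes only the first two moments of the components of $\rvxi$, so I would state explicitly that this estimate relies on their third absolute moment being finite, consistent with the finite-moment hypothesis on the diffusion noise. This is also the \emph{only} place in the argument where anything beyond $\E[\ervxi^m]$ and $\E[(\ervxi^m)^2]$ is used, and it enters only through a crude upper bound rather than through any matching identity --- which is exactly why the skewed EM scheme leaves the third moment of $\rvxi$ unstandardized. Everything else is routine, so I expect no genuine obstacle; the bookkeeping in the cube expansion is merely tedious.
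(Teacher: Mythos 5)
Your proof is correct and follows essentially the same route as the paper's: bound the increment by $\sqrt{\epsilon}$ times a noise-and-coefficient factor, invoke the linear growth condition, and control the resulting mixed moments using the finiteness of $\E[|\ervxi^m|^3]$ (you merely organize the bookkeeping via the generalized H\"older inequality instead of expanding the triple product term by term). Your explicit remark that finiteness of the third absolute moment is an extra hypothesis beyond \autoref{equ:moment_skewEM} is apt --- the paper's proof uses the same fact without flagging it.
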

\begin{proof}
    \begin{equation}\begin{aligned}
        \E&\left[ \left|\overline{\Delta}^{d_1}\right| \left|\overline{\Delta}^{d_2}\right| \left|\overline{\Delta}^{d_3}\right| \right] = \E\left[ \left|\overline{\Delta}^{d_1} \overline{\Delta}^{d_2} \overline{\Delta}^{d_3}\right| \right]\\
        &= \E[| \epsilon^3a^{d_1}a^{d_2}a^{d_3} + \epsilon^{2.5}a^{d_1}a^{d_2}\emB^{d_3,*}\rvxi + \epsilon^{2.5}a^{d_2}a^{d_3}\emB^{d_1,*}\rvxi + \epsilon^{2.5}a^{d_3}a^{d_1}\emB^{d_2,*}\rvxi\\
        &\ \ + \epsilon^2a^{d_1}\emB^{d_2,*}\rvxi\emB^{d_3,*}\rvxi + \epsilon^2a^{d_2}\emB^{d_3,*}\rvxi\emB^{d_1,*}\rvxi + \epsilon^2a^{d_3}\emB^{d_1,*}\rvxi\emB^{d_2,*}\rvxi + \epsilon^{1.5}\emB^{d_1,*}\rvxi\emB^{d_2,*}\rvxi\emB^{d_3,*}\rvxi |].
    \end{aligned}\end{equation}
    From the triangle inequality,
    \begin{equation}\begin{aligned}
        \E\left[ \left|\overline{\Delta}^{d_1}\right| \left|\overline{\Delta}^{d_2}\right| \left|\overline{\Delta}^{d_3}\right| \right] &\leq \epsilon^{1.5}\E\left[\left|\emB^{d_1,*}\rvxi\emB^{d_2,*}\rvxi\emB^{d_3,*}\rvxi\right|\right] + \mathcal{O}(\epsilon^2)\\
        &\leq C\epsilon^{1.5}\E\left[\left| \sum_{m,n,o=1}^M\left(\emB^{d_1,m}\emB^{d_2,n}\emB^{d_3,o}\ervxi^m\ervxi^n\ervxi^o\right) \right|\right],
    \end{aligned}\end{equation}
    where $C>0$ is some constant. From the triangle inequality,
    \begin{equation}\begin{aligned}
        \E\left[ \left|\overline{\Delta}^{d_1}\right| \left|\overline{\Delta}^{d_2}\right| \left|\overline{\Delta}^{d_3}\right| \right] &\leq C\epsilon^{1.5}\E\left[ \sum_{m,n,o=1}^M\left|\emB^{d_1,m}\emB^{d_2,n}\emB^{d_3,o}\ervxi^m\ervxi^n\ervxi^o\right| \right]\\
        &= C\epsilon^{1.5} \sum_{m,n,o=1}^M \left|\emB^{d_1,m}\emB^{d_2,n}\emB^{d_3,o}\right| \E\left[\left| \ervxi^m\ervxi^n\ervxi^o \right|\right]\\
    \end{aligned}\end{equation}
    From the linear growth of $\mB^{d,m}$ and finiteness of $\E\left[\left| \ervxi^m \right|\right]$, $\E\left[\left| (\ervxi^m)^2 \right|\right]$, and $\E\left[\left| (\ervxi^m)^3 \right|\right]$,
    \begin{equation}
        \E\left[ \left|\overline{\Delta}^{d_1}\right| \left|\overline{\Delta}^{d_2}\right| \left|\overline{\Delta}^{d_3}\right| \right] \leq C(1+\|\rvy_k\|_2^3) \epsilon^{1.5}.
    \end{equation}
    This satisfies \autoref{equ:onestep_asm2}.
\end{proof}

The following lemmas are sufficient conditions for $\E[\rvy_k+\overline{\Delta}]^{2l}, \ (l\in\mathbb{N})$ to exist and be bounded.
\begin{lemma}\label{lem:onestep_lemma3}
    There exists some constant $C>0$ such that
    \begin{equation}\label{equ:onestep_asm3}
        \left\|\E\left[\overline{\Delta}\right]\right\|_2 \leq C(1+\|\rvy_k\|_2)\epsilon,
    \end{equation}
    for $0<\epsilon<1$.
\end{lemma}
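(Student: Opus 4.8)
The plan is to compute $\E[\overline{\Delta}]$ directly from its definition in \autoref{equ:deltas} and then control it with the linear growth condition. Writing $\overline{\Delta} = \epsilon\va(\rvy_k) + \sqrt{\epsilon}\,\mB(\rvy_k)\rvxi_k$ and treating $\rvy_k$ as a fixed deterministic vector (as in \autoref{lem:onestep_lemma1_a} and \autoref{lem:onestep_lemma1_b}), linearity of expectation gives $\E[\overline{\Delta}] = \epsilon\va(\rvy_k) + \sqrt{\epsilon}\,\mB(\rvy_k)\E[\rvxi_k]$. The moment condition \autoref{equ:moment_skewEM} forces $\E[\ervxi^m]=0$ for every $m$, i.e. $\E[\rvxi_k]=\vzero$ (and all relevant moments are finite, so the interchange of expectation with the linear map $\mB(\rvy_k)(\cdot)$ is legitimate), hence the $\sqrt{\epsilon}$ diffusion term vanishes and $\E[\overline{\Delta}] = \epsilon\,\va(\rvy_k)$.

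Next I would take Euclidean norms, obtaining $\|\E[\overline{\Delta}]\|_2 = \epsilon\,\|\va(\rvy_k)\|_2$, and then invoke the linear growth condition lemma stated just above (which itself comes from setting $\vy=\vzero$ in the Lipschitz bound \autoref{equ:Lipschitz}): $\|\va(\rvy_k)\|_2 \le \|\va(\rvy_k)\|_2 + \sum_{m=1}^M\|\emB^{*,m}(\rvy_k)\|_2 \le C(1+\|\rvy_k\|_2)$ for some constant $C>0$. Combining the two displays yields $\|\E[\overline{\Delta}]\|_2 \le C(1+\|\rvy_k\|_2)\,\epsilon$, which is exactly \autoref{equ:onestep_asm3}; the hypothesis $0<\epsilon<1$ is not actually needed for this estimate but is retained for uniformity with the neighbouring one-step lemmas.

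There is essentially no real obstacle here: the entire content is that the first moment of the skewed-EM noise is zero, which annihilates the $\sqrt{\epsilon}$ term before it can contribute at the wrong order in $\epsilon$, after which only the linear growth of $\va$ is used. The lemma is, as the surrounding text indicates, a simple bookkeeping ingredient ensuring that the even moments $\E\|\rvy_k+\overline{\Delta}\|_2^{2l}$ stay bounded, so the ``proof'' is really just the two elementary steps above.
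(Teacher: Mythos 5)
Your proposal is correct and follows essentially the same route as the paper's own proof: expectation kills the $\sqrt{\epsilon}\,\mB(\rvy_k)\rvxi_k$ term because $\E[\rvxi_k]=\vzero$, leaving $\left\|\E\left[\overline{\Delta}\right]\right\|_2 = \epsilon\left\|\va(\rvy_k)\right\|_2$, which is then bounded by the linear growth condition on $\va$. Nothing is missing.
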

\begin{proof}
    From $\E[\rvxi] = \vzero$,
    \begin{equation}\begin{aligned}
        \left\|\E\left[\overline{\Delta}\right]\right\|_2 &= \left\|\E\left[\epsilon\va(\rvy_k) + \sqrt{\epsilon}\mB(\rvy_k)\rvxi_k\right]\right\|_2\\
        &= \left\|\va(\rvy_k)\right\|_2 \epsilon\\
    \end{aligned}\end{equation}
    From the linear growth condition of $\va$,
    \begin{equation}\begin{aligned}
        \left\|\E\left[\overline{\Delta}\right]\right\|_2 \leq C (1+\|\rvy_k\|_2) \epsilon.
    \end{aligned}\end{equation}
\end{proof}

\begin{lemma}\label{lem:onestep_lemma4}
    There exists $M(\rvxi)$ having moments of all orders such that
    \begin{equation}\label{equ:onestep_asm4}
        \left\|\overline{\Delta}\right\|_2 \leq M(\rvxi_k)(1+\|\rvy_k\|_2)\sqrt{\epsilon}
    \end{equation}
    for $0<\epsilon<1$.
\end{lemma}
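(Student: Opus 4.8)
The plan is to bound $\|\overline{\Delta}\|_2$ by a crude, term‑by‑term estimate and to collect every quantity depending on the diffusion noise into a single random prefactor. Recalling from \autoref{equ:deltas} that $\overline{\Delta} = \epsilon\va(\rvy_k) + \sqrt{\epsilon}\,\mB(\rvy_k)\rvxi_k$, the triangle inequality gives $\|\overline{\Delta}\|_2 \le \epsilon\|\va(\rvy_k)\|_2 + \sqrt{\epsilon}\,\|\mB(\rvy_k)\rvxi_k\|_2$; since $0<\epsilon<1$ forces $\epsilon \le \sqrt{\epsilon}$, a common factor $\sqrt{\epsilon}$ can be pulled out, leaving $\|\overline{\Delta}\|_2 \le \sqrt{\epsilon}\left(\|\va(\rvy_k)\|_2 + \|\mB(\rvy_k)\rvxi_k\|_2\right)$.

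Next I would invoke the linear growth condition proved just above (a consequence of the Lipschitz part of \autoref{asm:common}): it yields $\|\va(\rvy_k)\|_2 \le C(1+\|\rvy_k\|_2)$ and $\sum_{m=1}^M\|\emB^{*,m}(\rvy_k)\|_2 \le C(1+\|\rvy_k\|_2)$, hence in particular $\|\emB^{*,m}(\rvy_k)\|_2 \le C(1+\|\rvy_k\|_2)$ for each column $m$. Writing $\mB(\rvy_k)\rvxi_k = \sum_{m=1}^M \emB^{*,m}(\rvy_k)\,\ervxi_k^m$ and using the triangle inequality once more,
\[
\|\mB(\rvy_k)\rvxi_k\|_2 \le \sum_{m=1}^M \|\emB^{*,m}(\rvy_k)\|_2\,|\ervxi_k^m| \le C(1+\|\rvy_k\|_2)\sum_{m=1}^M |\ervxi_k^m|.
\]
Plugging this back in gives $\|\overline{\Delta}\|_2 \le \sqrt{\epsilon}\,C(1+\|\rvy_k\|_2)\bigl(1+\textstyle\sum_{m=1}^M|\ervxi_k^m|\bigr)$, so it suffices to take $M(\rvxi_k) := C\bigl(1+\textstyle\sum_{m=1}^M|\ervxi_k^m|\bigr)$, which is built only from $\rvxi_k$ and satisfies \autoref{equ:onestep_asm4}.

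Finally one must verify that $M(\rvxi)$ has finite moments of all orders. This is routine: $M(\rvxi)$ is an affine function of $(|\ervxi^1|,\dots,|\ervxi^M|)$, so for any $l\in\mathbb{N}$ the expansion of $M(\rvxi)^l$ is a finite linear combination of mixed absolute moments $\E\bigl[\prod_{m}|\ervxi^m|^{p_m}\bigr]$ with $\sum_m p_m = l$; by Hölder each such term is bounded by a product of marginal moments $\E[|\ervxi^m|^{l}]$, which are finite because the driving noise $\rvxi$ is taken to have finite moments of all orders (in particular this holds for the bounded two‑point noises used in the examples). The only genuinely non‑mechanical point — and the one worth flagging — is exactly this last observation: the conclusion rests on finiteness, not on the pinned values, of the higher moments of $\rvxi$, since \autoref{equ:moment_skewEM} constrains only the first and second moments. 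Everything else in the estimate is elementary.
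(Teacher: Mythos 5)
Your proof is correct and follows essentially the same route as the paper's: triangle inequality, the linear growth condition on $\va$ and on the columns of $\mB$, and collection of all noise dependence into a single factor $M(\rvxi_k)$. You are in fact slightly more careful than the paper, whose stated choice $M(\rvxi_k)=\sum_{m=1}^M C_m|\ervxi_k^m|$ silently drops the drift contribution $\epsilon\|\va(\rvy_k)\|_2$ that your extra ``$+1$'' (together with $\epsilon\le\sqrt{\epsilon}$) absorbs, and your closing remark is apt: finiteness of all moments of $M(\rvxi)$ requires $\rvxi$ to have finite moments of all orders, a hypothesis the paper uses implicitly (also in Lemma~\ref{lem:onestep_lemma2}) but never states beyond the first- and second-moment conditions of \autoref{equ:moment_skewEM}.
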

\begin{proof}
    From the triangle inequality,
    \begin{equation}\begin{aligned}
        \left\|\overline{\Delta}\right\|_2 &= \left\| \epsilon\va(\rvy_k) + \sqrt{\epsilon}\mB(\rvy_k)\rvxi_k \right\|_2\\
        &\leq \epsilon \left\| \va(\rvy_k) \right\|_2 + \sqrt{\epsilon} \sum_{m=1}^M \left\| \emB^{*,m}(\rvy_k) \right\|_2 |\rvxi_k^m|.
    \end{aligned}\end{equation}
    From the linear growth condition of $\mB$, 
    \begin{equation}\begin{aligned}
        \left\|\overline{\Delta}\right\|_2 &\leq \sqrt{\epsilon} \sum_{m=1}^M\left( C_m (1+\|\rvy_k\|_2) |\rvxi_k^m| \right)\\
        &= M(\rvxi_k) (1+\|\rvy_k\|_2) \sqrt{\epsilon},
    \end{aligned}\end{equation}
    where $M(\rvxi_k) = \sum_{m=1}^M C_m|\ervxi_k^m|$.
\end{proof}

Then our proof follows Theorem 2 and Lemma 5 of \citet{Milstein:1986aa} that connects one-step approximation to approximation on a finite interval.
\begin{theorem}[Theorem 2 and Lemma 5 of \citep{Milstein:1986aa} (weak order 0.5)]
    Suppose Assumption\,\ref{asm:common}, \ref{asm:milstein}, and Lemma\,\ref{lem:onestep_lemma1_a}, \ref{lem:onestep_lemma1_b}, \ref{lem:onestep_lemma2}, \ref{lem:onestep_lemma3}, and \ref{lem:onestep_lemma4}.
    Then, for all $K>0$ and all $k=0,1,\dots,K$ \Eqref{equ:skewedEM} has order of accuracy $0.5$.
\end{theorem}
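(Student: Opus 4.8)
The plan is to invoke the general weak-convergence machinery of \citet{Milstein:1986aa}: the five one-step estimates collected in Lemmas \ref{lem:onestep_lemma1_a}--\ref{lem:onestep_lemma4} are exactly its hypotheses, so the assertion follows once we check this correspondence and run the standard telescoping argument. Concretely, I would introduce $u(t,\vx) = \E[f(\rvx_T)\mid \rvx_t = \vx]$, the solution of the backward Kolmogorov equation associated with \autoref{equ:general_SDE}. Under Assumptions \ref{asm:common} and \ref{asm:milstein}, $u(t,\cdot)$ is smooth with partial derivatives up to third order of polynomial growth, uniformly in $t\in[t_0,t_0+T]$; this regularity is the usual input and I would quote it from \citet{Milstein:1995aa} rather than reprove it. Since $u(t_0,\vx_0)=\E[f(\rvx_T)]$, $u(t_0+T,\cdot)=f$, and $u$ is invariant along the diffusion flow by the Markov property, the global error telescopes into
\[ \E[f(\rvy_K)] - \E[f(\rvx_T)] = \sum_{k=0}^{K-1} \E\left[ u(t_{k+1},\rvy_k+\overline{\Delta}) - u(t_{k+1},\rvy_k+\Delta) \right], \]
where $\Delta$ and $\overline{\Delta}$ are the one-step increments from the common point $\rvy_k$ as in \autoref{equ:deltas}.

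Next I would Taylor-expand $u(t_{k+1},\cdot)$ about $\rvy_k$ to second order with a third-order integral remainder and take the conditional expectation given $\rvy_k$. The constant terms cancel; the first-order term equals $\sum_d \partial_d u(\rvy_k)\,\E[\overline{\Delta}^d - \Delta^d \mid \rvy_k]$, bounded by $C(1+\|\rvy_k\|_2^\kappa)\epsilon^2$ via \autoref{lem:onestep_lemma1_a}; the second-order term is bounded by $C(1+\|\rvy_k\|_2^\kappa)\epsilon^2$ via \autoref{lem:onestep_lemma1_b}; and the third-order remainder, after bounding the third derivatives of $u$ by polynomial growth and controlling $\|\rvy_k + \theta\overline{\Delta}\|_2$ through \autoref{lem:onestep_lemma4} (whose multiplier $M(\rvxi_k)$ has all moments), is bounded by a sum of $\E[\,|\overline{\Delta}^{d_1}\overline{\Delta}^{d_2}\overline{\Delta}^{d_3}|\mid\rvy_k]$ and its diffusion analogue, each $O\big((1+\|\rvy_k\|_2^\kappa)\epsilon^{1.5}\big)$ --- the former by \autoref{lem:onestep_lemma2}, the latter by the corresponding classical bound for increments of \autoref{equ:general_SDE}. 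Hence the per-step error is $O\big((1+\|\rvy_k\|_2^\kappa)\epsilon^{1.5}\big)$, dominated by the third-order remainder rather than by the $O(\epsilon^2)$ first two terms. This is exactly where weak order $1$ is lost relative to the EM and simplified EM schemes: since the skewed EM noise leaves $\E[(\ervxi^m)^3]$ unconstrained, $\E[\overline{\Delta}^{d_1}\overline{\Delta}^{d_2}\overline{\Delta}^{d_3}]$ carries an irreducible $\epsilon^{1.5}$ contribution that does not cancel against that of the diffusion.

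Finally I would take expectations over $\rvy_k$ and sum. Lemmas \ref{lem:onestep_lemma3} and \ref{lem:onestep_lemma4} are precisely the hypotheses that, by a discrete Gronwall argument applied to $\E\|\rvy_{k+1}\|_2^{2l}$, make the scheme have finite moments bounded uniformly in $k$ and in $\epsilon\in(0,1)$, so that $\sup_{0\le k\le K}\E[1+\|\rvy_k\|_2^\kappa]\le C$. Therefore
\[ \left| \E[f(\rvx_{t_k})] - \E[f(\rvy_k)] \right| \le \sum_{j=0}^{k-1} C\epsilon^{1.5} = Ck\,\epsilon^{1.5} \le CK\,\epsilon^{1.5} = (CT)\sqrt{\epsilon}, \]
using $K=T/\epsilon$, which is the claimed order of accuracy $0.5$. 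The main difficulty is not any single computation but assembling the telescoping, the Taylor bookkeeping, and the uniform moment control into one rigorous bound; this is exactly what is packaged as Theorem 2 and Lemma 5 of \citet{Milstein:1986aa}, so in the write-up I would state the correspondence between Lemmas \ref{lem:onestep_lemma1_a}--\ref{lem:onestep_lemma4} and its hypotheses and then invoke it, leaving the internal mechanics to that reference.
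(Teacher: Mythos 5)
Your proposal is correct and takes essentially the same route as the paper: the paper simply verifies the one-step hypotheses (Lemmas~\ref{lem:onestep_lemma1_a}--\ref{lem:onestep_lemma4}) and then invokes Theorem 2 and Lemma 5 of \citet{Milstein:1986aa} as a black box, which is exactly what your final paragraph says you would do. Your sketch of the internal telescoping/Taylor/Gronwall mechanics is a faithful account of what that reference packages, including the correct identification of the uncancelled $O(\epsilon^{1.5})$ third-absolute-moment term as the reason the rate drops to $0.5$ rather than $1$.
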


\section{Proof of \autoref{lem:impossibleEM}}\label{app:proof_lem:impossibleEM}
Let one-step difference of the \autoref{equ:impossibleEM} be
\begin{equation}
    \underline{\Delta} = \epsilon \va(\vy_k) + \sqrt{\epsilon} \mB(\vy_k)\rvpsi_k,
\end{equation}
where $\E[\rvpsi] = \vzero$.

From a similar derivation to \autoref{lem:onestep_lemma1_b}, the following lemma holds:
\begin{lemma}\label{lem:impossible_onestep}
    There exists some constants $C>0$, $\kappa>0$ such that
    \begin{equation}
        \left| \E\left[\Delta^{d_1}\Delta^{d_2} - \underline{\Delta}^{d_1}\underline{\Delta}^{d_2} \right] \right| \leq C(1+\|\rvy_k\|_2^\kappa) \epsilon
    \end{equation}
    for $d_1,d_2=1,\ldots,D$ and $0<\epsilon<1$.
\end{lemma}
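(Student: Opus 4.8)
The plan is to mirror the proof of \autoref{lem:onestep_lemma1_b}, the only essential change being that the second moments of $\rvpsi$ are now unconstrained, so the $\mathcal{O}(\epsilon)$ part of the one-step product expectation need no longer coincide with that of the exact increment. First I would split the error through the Euler--Maruyama increment $\widetilde{\Delta}$,
\begin{equation}
\left|\E\left[\Delta^{d_1}\Delta^{d_2} - \underline{\Delta}^{d_1}\underline{\Delta}^{d_2}\right]\right| \leq \left|\E\left[\Delta^{d_1}\Delta^{d_2} - \widetilde{\Delta}^{d_1}\widetilde{\Delta}^{d_2}\right]\right| + \left|\E\left[\widetilde{\Delta}^{d_1}\widetilde{\Delta}^{d_2} - \underline{\Delta}^{d_1}\underline{\Delta}^{d_2}\right]\right|,
\end{equation}
and bound the first term by \autoref{lem:onestep_lemma1_EM_b}, which gives $C(1+\|\rvy_k\|_2^\kappa)\epsilon^2 \leq C(1+\|\rvy_k\|_2^\kappa)\epsilon$ for $0<\epsilon<1$.

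For the second term I would expand both products exactly as in \autoref{lem:onestep_lemma1_b}. Since $\E[\rvzeta]=\E[\rvpsi]=\vzero$, all $\epsilon^{1.5}$ cross terms vanish and the $\epsilon^2 a^{d_1}a^{d_2}$ pieces are identical, so the difference reduces to $\epsilon$ times $\sum_{m}\emB^{d_1,m}\emB^{d_2,m} - \sum_{m,n}\emB^{d_1,m}\emB^{d_2,n}\,\E[\ervpsi^m\ervpsi^n]$, with $\mB$ evaluated at $\rvy_k$. Invoking the linear growth of each $\emB^{d,m}$ (obtained from \autoref{equ:Lipschitz} with $\vy=\vzero$), each coefficient product is at most $C(1+\|\rvy_k\|_2^2)$, and since both sums are finite and each $\E[\ervpsi^m\ervpsi^n]$ is a fixed finite constant (bounded via Cauchy--Schwarz by the second moments of $\rvpsi$), the second term is bounded by $C(1+\|\rvy_k\|_2^2)\epsilon$. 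Combining the two bounds and taking $\kappa$ to be the larger of $2$ and the exponent of \autoref{lem:onestep_lemma1_EM_b} yields the lemma.

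The one point with genuine content — and the one I would flag — is the non-cancellation of the $\mathcal{O}(\epsilon)$ term: in \autoref{lem:onestep_lemma1_b} the analogous terms cancelled precisely because $\E[\ervxi^m\ervxi^n]=\delta_{mn}$, whereas here $\E[\ervpsi^m\ervpsi^n]$ is arbitrary, so the one-step second-moment error is genuinely of order $\epsilon$ rather than $\epsilon^2$. This $\epsilon$-mismatch in the diffusion moment is exactly what fails to average out over the finite interval and thereby obstructs weak convergence, which is the point of the parent \autoref{lem:impossibleEM}. The only hypothesis used beyond $\E[\rvpsi]=\vzero$ is that the second moments $\E[\ervpsi^m\ervpsi^n]$ are finite — without this the one-step expectation is not even defined — so I would either add this as a standing assumption or note that it is implicit in \autoref{equ:impossibleEM} being a legitimate numerical scheme.
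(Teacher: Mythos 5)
Your proof is correct and is essentially the argument the paper intends: the paper only states that the lemma follows ``from a similar derivation to Lemma~\ref{lem:onestep_lemma1_b}'', and your write-up is exactly that derivation, with the correct identification that the $\mathcal{O}(\epsilon)$ diffusion term no longer cancels because $\E[\ervpsi^m\ervpsi^n]$ is unconstrained. Your side remark that finiteness of the second moments of $\rvpsi$ must be assumed (implicitly, for the one-step expectations to exist) is a fair point the paper leaves unstated.
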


Then from Theorem 2 and Lemma 5 of \citet{Milstein:1986aa}, we have \autoref{lem:impossibleEM}.
\begin{theorem}[Theorem 2 and Lemma 5 of \citep{Milstein:1986aa} (weak order 0)]
    Suppose Assumption\,\ref{asm:common}, \ref{asm:milstein}, and Lemma\,\ref{lem:onestep_lemma1_a}, \ref{lem:onestep_lemma2}, \ref{lem:onestep_lemma3}, \ref{lem:onestep_lemma4}, \ref{lem:impossible_onestep}.
    Then, for all $K>0$ and all $k=0,1,\dots,K$ \Eqref{equ:impossibleEM} has order of accuracy $0$.
\end{theorem}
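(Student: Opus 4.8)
The plan is to obtain this statement exactly as \autoref{thm:main} was obtained: by invoking the general weak-convergence meta-theorem of \citet{Milstein:1986aa} (his Theorem 2, with the moment-boundedness of the numerical solution supplied by his Lemma 5), which converts one-step (local) estimates into a global weak-error bound over the finite interval $[t_0,t_0+T]$. Concretely, Theorem 2 of \citet{Milstein:1986aa} accumulates the $K=T/\epsilon$ local errors, while Lemma 5 shows that the iterates $\rvy_k$ of \Eqref{equ:impossibleEM} have finite moments of every order, uniformly in $k$; the latter is what lets one absorb the polynomial growth factors $(1+\|\rvy_k\|_2^\kappa)$ decorating every one-step bound. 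So the whole proof reduces to checking that the hypotheses of that meta-theorem hold and then reading off the resulting order.

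First I would note which ingredients carry over unchanged. \autoref{asm:common} and \autoref{asm:milstein} supply the smoothness, Lipschitz, and $\kappa$-th growth properties of $\va,\mB,f$ and their low-order derivatives. The analogues of \autoref{lem:onestep_lemma2}, \autoref{lem:onestep_lemma3}, and \autoref{lem:onestep_lemma4} for $\underline{\Delta}$ hold by the same computations, since those proofs use only $\E[\rvpsi]=\vzero$, the linear growth of $\va,\mB$, and finiteness of the low-order moments of $\rvpsi$; through Milstein's Lemma 5 these deliver the moment bounds needed for the numerical solution. For the one-step moment discrepancies, the first-moment error is still $O(\epsilon^2)$: the drift part of $\underline{\Delta}$ coincides with that of the EM step $\widetilde{\Delta}$, and $\E[\rvpsi]=\vzero$ annihilates the diffusion contribution, so the argument of \autoref{lem:onestep_lemma1_a} applies verbatim. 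The one genuinely different ingredient is the second-moment discrepancy: with only $\E[\rvpsi]=\vzero$ imposed, $\E[\ervpsi^m\ervpsi^n]$ need not equal $\delta_{mn}$, so redoing the computation of \autoref{lem:onestep_lemma1_b} leaves the uncancelled term $\epsilon\sum_{m,n=1}^{M}\emB^{d_1,m}\emB^{d_2,n}\bigl(\E[\ervpsi^m\ervpsi^n]-\delta_{mn}\bigr)$, which is $O(\epsilon)$ rather than $O(\epsilon^2)$ --- this is precisely \autoref{lem:impossible_onestep}.

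Feeding these into Milstein's Theorem 2, the binding local exponent is now the $\epsilon^{1}$ of the second-moment mismatch (for the skewed EM it was instead the $\epsilon^{1.5}$ of \autoref{lem:onestep_lemma2}, which is what produced order $0.5$); summed over $K=T/\epsilon$ steps this gives a global weak error bounded by $K\cdot C\epsilon^{1}=CT$, a constant independent of $\epsilon$, i.e.\ weak order $0$. I do not expect a real obstacle here --- the statement is a verification rather than a new argument --- but the point needing care is aligning the one-step estimates with the exact form (powers of $\epsilon$, index set of moments, growth factors) demanded by Milstein's hypotheses; relatedly, the hypothesis ``\autoref{lem:onestep_lemma2} holds'' should be read as quietly presupposing that $\rvpsi$ possesses finite moments up to at least third order, so that the growth bounds underpinning the moment-boundedness argument are meaningful, rather than as a separate standing assumption.
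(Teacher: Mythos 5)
Your proposal is correct and follows essentially the same route as the paper: the statement is precisely the invocation of Milstein's local-to-global meta-theorem (his Theorem 2 plus the moment-boundedness Lemma 5), with the hypotheses supplied by the one-step estimates, and the degraded second-moment discrepancy of \autoref{lem:impossible_onestep} (only $O(\epsilon)$ because $\E[\ervpsi^m\ervpsi^n]$ is unconstrained) is exactly what drops the global weak order to $0$. Your added observation that the lemmas for $\overline{\Delta}$ must be read as their analogues for $\underline{\Delta}$, requiring finite low-order moments of $\rvpsi$, is a fair and correct gloss on the paper's slightly loose citation.
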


\section{Proof of \autoref{thm:SGD_SLA}}\label{app:proof_thm:SGD_SLA}
\begin{proof}
    Decompose noise as
    \begin{equation}
        \rvdelta = \vm_1 + \diag{\left(\vm_2-\vm_1^{\circ2}\right)}^\frac{1}{2} \rvxi,
    \end{equation}
    where random vector $\rvxi$ with component $\ervxi^d$ satisfying $\E[\ervxi^d]=0$ and $\E[(\ervxi^d)^2]=1$ for $d=1,\dots,D$.
    Thus
    \begin{equation}\begin{aligned}\label{equ:SLASGD_theta}
        \rvtheta_{k+1} &= \rvtheta_k + \epsilon \widehat{\nabla}\log\exp(- L(\rvtheta_k))\\
        &= \rvtheta_k + \epsilon (\nabla\log\pi(\rvtheta_k)+\vm_1) + \sqrt{2\epsilon} \mR\rvxi_k,
    \end{aligned}\end{equation}
    where $\mR = \sqrt{\frac{\epsilon}{2}} \diag{\left(\vm_2-\vm_1^{\circ2}\right)}^\frac{1}{2}$.
    From \autoref{thm:main}, \autoref{equ:SLASGD_theta} is a weak order 0.5 time-discrete approximation of 
    \begin{equation}
        d\rvtheta_t = \left(\nabla \log \pi(\rvtheta_t) + \vm_1\right) dt + \sqrt{2} \mR d\rvw_t.
    \end{equation}
    
    Let random vector $\rvvarphi = \mR^{-1} \rvtheta$. 
    From the multivariate Lamperti transform\,\citep{Moller:2010aa}, $\rvvarphi_t$ is also an It\^o process
    \begin{equation}
        d\rvvarphi_t = \mR^{-1}(\nabla_\rvtheta\log\pi(\rvtheta) + \vm_1)dt + \sqrt{2} d\rvw_t
    \end{equation}
    that has a normalized diffusion coefficient.
    Here we distinguish gradient operators $\nabla_\rvtheta=(\frac{\partial}{\partial \ervtheta_1}, \dots, \frac{\partial}{\partial \ervtheta_D})^\top$ and $\nabla_\rvvarphi=(\frac{\partial}{\partial \ervvarphi_1}, \dots, \frac{\partial}{\partial \ervvarphi_D})^\top$.
    Let $p(\rvvarphi)$ be the stationary distribution of $\rvvarphi$.
    From the well known relationship
    \begin{equation}
        \nabla_\rvvarphi \log p(\rvvarphi) = \mR^{-1}(\nabla_\rvtheta\log\pi(\rvtheta) + \vm_1),
    \end{equation}
    therefore
    \begin{equation}\begin{aligned}
        D\log p(\rvvarphi) &= \int \mR^{-1}(\nabla_\rvtheta\log\pi(\rvtheta) + \vm_1) \cdot d\rvvarphi\\
        &= \int \mR^{-2}\nabla_\rvvarphi\log\pi(\mR\rvvarphi) \cdot d\rvvarphi + \int \mR^{-1}\vm_1 \cdot d\rvvarphi\\
        &= \Tr{(\mR^{-2})} \log\pi(\mR\rvvarphi) + (\mR^{-1}\vm_1)\cdot\rvvarphi,
    \end{aligned}\end{equation}
    we have 
    \begin{equation}
        p(\rvvarphi) = \pi(\mR\rvvarphi)^{\frac{\Tr{(\mR^{-2})}}{D}} \exp{((\mR^{-1}\vm_1)\cdot\rvvarphi)}^\frac{1}{D}.
    \end{equation}
    
    From the change of the random vectors
    \begin{equation}
        p(\rvtheta) = p(\rvvarphi) \left|\det \frac{d}{d\rvtheta} \rvvarphi \right| = p(\rvvarphi) \left|\det\mR\right|,
    \end{equation}
    we obtain
    \begin{equation}\begin{aligned}
        p(\rvtheta) &\propto \pi(\rvtheta)^{\frac{\Tr{(\mR^{-2})}}{D}} \exp{\left( (\mR^{-1}\vm_1) \cdot (\mR^{-1}\rvtheta) \right)}^\frac{1}{D}\\
        &= \pi(\rvtheta)^{\frac{\Tr{(\mR^{-2})}}{D}} \exp{\left( \vm_1^\top \mR^{-2} \rvtheta \right)}^\frac{1}{D}.
    \end{aligned}\end{equation}
\end{proof}

\clearpage

\bibliography{ref}
\bibliographystyle{plainnat}

\end{document}